\newtheorem{thm}{Theorem}[section]
\newtheorem{lemma}[thm]{Lemma}
\theoremstyle{definition}
\newtheorem{defin}[thm]{Definition}
\newtheorem{exam}[thm]{Example}
\newtheorem{rem}[thm]{Remark}
\theoremstyle{remark}
\newcommand\numberthis{\addtocounter{equation}{1}\tag{\theequation}}
\newcommand{\vc}[1]{\textup{vec}\big(#1\big)}
\numberwithin{equation}{section}
\newcommand\blfootnote[1]{%
	\begingroup
	\renewcommand\thefootnote{}\footnote{#1}%
	\addtocounter{footnote}{-1}%
	\endgroup
}
\def\NN{\mathbb N}
\def\RR{\mathbb R}
\def\nn{\mathcal N}
\def\ii{\mathcal I}
\def\ll{\mathcal L}
\def\pp{\mathcal P}
\def\W{\bm{W}}
\def\sgs{\sigma_{\textup{small}}}
\def\sgmin{\sigma_{\textup{min}}}
\title{Convergence of gradient based training for linear Graph Neural Networks}
\author{Dhiraj Patel\blfootnote{Email id: patel@cs.rwth-aachen.de, savostianov@cs.rwth-aachen.de, schaub@cs.rwth-aachen.de}, Anton Savostianov, Michael T. Schaub\\
	Computational Network Science, RWTH Aachen University\\ Aachen, Germany
	}
\date{}
\begin{document}
	\maketitle
	
	\begin{abstract}
        Graph Neural Networks (GNNs) are powerful tools for addressing learning problems on graph structures, with a wide range of applications in molecular biology and social networks. However, the theoretical foundations underlying their empirical performance are not well understood. In this article, we examine the convergence of gradient dynamics in the training of linear GNNs. Specifically, we prove that the gradient flow training of a linear GNN with mean squared loss converges to the global minimum at an exponential rate. The convergence rate depends explicitly on the initial weights and the graph shift operator, which we validate on synthetic datasets from well-known graph models and real-world datasets. Furthermore, we discuss the gradient flow that minimizes the total weights at the global minimum. In addition to the gradient flow, we study the convergence of linear GNNs under gradient descent training, an iterative scheme viewed as a discretization of gradient flow.
	\end{abstract}
	
	\textbf{Keywords:} Graph; Graph neural network; Deep learning; Node regression; Gradient flow; Optimization.
	\blfootnote{2010 \textit{Mathematics Subject Classification.} 05C82, 91020, 92B20, 68T05.}
	
	\section{Introduction}
	Deep learning methods have been extensively applied in many applications of the machine learning field, such as computer vision \cite{he2016deep}, speech recognition \cite{dahl2011context}, and natural language processing \cite{raffel2020exploring}. It helps identify patterns from large and complex datasets. Gradient descent is a learning paradigm of deep neural networks, used to train the networks to optimize the prediction error and significantly enhance their accuracy. While deep neural networks have shown exceptional performance in many fields, the mathematical principles underlying their success are not well understood. In this paper, we study the mathematical properties of the optimization method for deep learning performance.
	
	The mathematical study of learning deep networks focuses on the convergence of the gradient descent algorithm, which optimizes the prediction error of the network. Non-linear deep neural networks express complex and non-linear relationships within datasets; however, the optimization problem is challenging due to their non-convex architecture. On the contrary, linear neural networks are not ideal for modeling complex data in many machine learning applications due to their simple structure. However, the non-convex loss in linear deep neural networks makes the optimization problem non-trivial. In recent years, articles \cite{arora2018convergence,arora2018optimization,bah2022learning,yacine2023geometric,kawaguchi2016deep,trager2020pure} have explored important properties of gradient dynamics for linear deep neural networks and studied the optimization problem in that setting.
	
	In order to train deep networks, data are usually collected from Euclidean domains in many applications \cite{aggarwal2018neural}, such as text mining, image classification, speech recognition, and stock price prediction. However, data structures generally have complex relationships and dependencies that can be embedded in non-Euclidean domains such as graphs. Deep learning on graphs can be used to detect communities \cite{schaub2017many}, predict user preferences \cite{wu2022graph}, and identify the properties of molecules \cite{wieder2020compact}. Graph neural networks have been gaining attention for their applications in geometric deep learning.
	
	Graph neural networks are primarily concerned with two types of learning tasks: graph classification tasks \cite{defferrard2016convolutional,kearnes2016molecular} and node regression or classification tasks \cite{kipf2017semisupervised,xu2021optimization}. In the applications, these tasks demonstrate exceptional results. For example, the graph classification task is applied to classify various chemical compounds \cite{srinivasan1994mutagenesis}, and the node regression task is used to perform traffic forecasting \cite{cui2019traffic}. However, the mathematical aspects of GNNs to comprehend their accuracy and limitations are not well studied, even less so than those of classical deep learning methods. This raises the following questions: Under what conditions does the gradient descent training of GNNs converge to the global minimum? How do we control the speed of convergence in training? How do we minimize the total weight at the global minimum? In this article, we study the convergence analysis of the gradient descent training of linear GNN, particularly for the node regression problem.

    \paragraph{Background and related works.} The convergence of the gradient descent method for a convex objective function is well understood \cite{boyd2004convex,nesterov2013introductory}. In fact, all local minima of a convex objective function are global minima. However, a sub-optimal local minimum may exist for a non-convex objective function. As a consequence, finding the global minimum of a general non-convex function by gradient descent is NP-complete \cite{murty1985some}. In order to ensure the convergence of gradient descent to the global optimum, it is necessary to assume some additional conditions on the network. Recent theoretical studies have proven the global optimal solution for gradient-based training of the neural networks with linear activation \cite{arora2018convergence,arora2018optimization,bah2022learning,xu2021optimization}, ReLU activation \cite{awasthi2021convergence,li2017convergence}, smooth activation \cite{chatterjee2022convergence}, and Gaussian inputs \cite{brutzkus2017globally,ge2018learning}.
    
    Several researchers have concentrated their study on the critical points of objective functions to investigate the optimal training loss \cite{ge2015escaping,lee2016gradient}. Lee et al. \cite{lee2016gradient} show that gradient descent with a random initialization converges to the local minimum if the objective function satisfies the strict saddle point property. 
    While a shallow network always satisfies the strict saddle point property, a deep network with more than two hidden layers probably fails to satisfy such a condition \cite{kawaguchi2016deep}.
    On the other hand, articles \cite{arora2018convergence,bah2022learning,chatterjee2022convergence,yacine2023geometric,saxe2014exact} focus on the initialization and the training data for the convergence of the gradient descent method in deep neural networks. In the linear deep neural network, Bartlett et al. \cite{bartlett2018gradient} prove that the gradient descent algorithm with the square loss converges to the global minimum if the initial loss is sufficiently close to the global minimum. Later, Chatterjee \cite{chatterjee2022convergence} generalizes the result for non-linear deep neural networks. The articles \cite{arora2018convergence,bah2022learning} study the convergence analysis of gradient flow under the balanced conditions of the initial weight matrix.
    
    The convergence analysis in the context of graph neural networks has not been explored much.
    Xu et al. \cite{xu2021optimization} consider a linear graph neural network model and show that the positive singular margin of the initial weights is sufficient to ensure the square loss converges to the global minimum. Indeed, the loss surface is devoid of any sub-optimal local minima \cite{laurent2018deep}.
    Awasthi et al. \cite{awasthi2021convergence} generalize this result for shallow GNNs with ReLU activation and examine the convergence of gradient descent training in a probabilistic framework. In particular, with Gaussian initialization, the gradient descent algorithm recovers the realizable data of the GNN architecture with high probability.


    \paragraph{Contributions.} We present a comprehensive study on the convergence of gradient descent training for linear graph neural networks. The key features of the article are as follows.
	\begin{itemize}
		\item In a semi-supervised setting, we show that the mean square loss exponentially converges to its global minimum in a gradient flow training of linear GNNs. The convergence rate is proportional to the singular value of the initial weight matrices and the feature data embedded on graphs. Moreover, we discuss the gradient descent algorithm for linear GNN training.
		
		\item The gradient flow training of linear GNN with respect to square loss converges to the global minimum, provided the initial weight parameters have a positive singular margin \cite{xu2021optimization}. In practice, verifying such a condition for any given weight matrix is challenging. We study the convergence analysis of gradient dynamics in linear GNNs without such assumptions and provide an initialization that is free from the restriction on saddle points, imposed by singular margin assumption, and ensures convergence to the global minimum.
		
		\item The convergence analysis of the gradient dynamics is discussed for deep neural networks, assuming the given data matrix is full rank \cite{arora2018convergence,bah2022learning,chatterjee2022convergence}, and in the context of GNN, the given feature matrix associated with the graph shift matrix is full rank \cite{awasthi2021convergence,xu2021optimization}. To the best of our knowledge, the convergence analysis of gradient-based training in deep neural networks associated with low-rank data has not been studied in the literature. In this article, we show that the convergence of the mean square loss to the global minimum depends on the smallest non-zero singular value of the product of the feature data and graph shift matrix.
		
		\item Under the ``nice" initialization, the gradient-based training converges to the global minimum of the loss surface. However, based on the GNN architecture, the loss surface might have infinitely many global minima. Minimizing the weights at the global minima reduces the computational complexity of the model and compresses the memory storage while the model remains consistent in terms of its accuracy and performance. In this context, we explore the properties of initialization, which leads the training process towards minimizing the loss and optimizing the weights.
	\end{itemize}
	
    \paragraph{Outline.}
	The paper is organized as follows. Linear GNNs are defined in Section \ref{sec:prelim}, which discusses the preliminary results of matrix theory and the mean square loss associated with linear GNN. Section \ref{sec:grad_flow} explores the convergence analysis of gradient flow training for linear GNN, while the optimization of the weights at the global minima of the loss surface is discussed in Section \ref{sec:min_energy}. The analogous results for the convergence of gradient flow in discrete time are addressed in Section \ref{sec:grad_desc}. Finally, in Section \ref{sec:numerical}, we validate the gradient dynamics on synthetic graph datasets and a real-world graph dataset, and we discuss the limitations of our result.
	
	\section{Preliminaries and problem setup}\label{sec:prelim}
	In this section, we recall the notion of graphs and set up the graph neural network model. We then recap some preliminary results from matrix theory that are used later in the paper.

%
%
%
%

	\subsection{Graphs and Graph Neural Networks}
    \paragraph{Graphs and data supported on graphs.}
    A graph is defined as an ordered pair $G=(V,E)$, which consists of a set of nodes $V$ and a set of edges $E\subseteq \{\{u,v\} \mid u,v \in V\}$ describing pairwise relations between the nodes. 
    For simplicity, we identify the node set with the integers from $1$ up to $n$ as follows, i.e., $V=\{1,\dots,n\}$, where $n$ is the number of nodes in the graph.
    This enables us to encode the structure of the graph into an adjacency matrix $A\in\{0,1\}^{n\times n}$ with entries $A_{ij}=1$ if the edge $\{i,j\}$ exists in the edge set of the graph, and $A_{ij}=0$, otherwise.
    We say a matrix $S\in\mathbb{R}^{n\times n}$ is a graph shift operator if it has the same sparsity pattern as $A$ in the off-diagonal terms, i.e., we have $S_{ij}\neq 0 \Leftrightarrow A_{ij}\neq 0$ for all $i\neq j$.

    In the context of machine learning on graphs, we are interested in semi-supervised learning, where each node in the graph has feature data associated with it, and the graph contains both labeled and unlabeled nodes. We are concerned with predicting the labels of the nodes based on the graph structure and node features.
    We assume that each node $i$ is endowed with a feature vector $x_i \in\mathbb{R}^{d_x}$ (describing, e.g., certain attributes of the node), and $X\in \RR^{d_x\times n}$ denotes the feature data matrix with the $i$-th columns corresponding to feature data $x_i$. The set $\ii\subset V$ is the collection of labeled nodes and $y_j\in\mathbb{R}^{d_y}$ (describing quantities we may want to estimate or predict) is the label vector for node $j\in \ii$. The labeled node matrix is denoted by $Y=[y_i]_{i\in \ii}\in \RR^{d_y\times \bar{n}}$, where $\bar{n}$ is the number of labeled nodes.
    
    \paragraph{Graph Neural Networks.}
    
    Graph neural network is a mapping from node feature data to label data, and it operates by transmitting and aggregating messages between graph nodes. Our problem of interest is to train GNN to improve its accuracy in the prediction of label data.
    
    The architecture of deep GNN is composed of several hidden layers, each containing node data derived from the aggregation of the node data from the previous layer. Mathematically, if $X_{\ell}$ denotes the collection of node data on the $\ell$-th layer, then data representation on $(\ell+1)$-th layer is given by
    \begin{align*}
    	Z_{\ell}&=F(X_{\ell})\\
    	X_{\ell+1}&=\varphi_{\ell}(Z_{\ell}),
    \end{align*}
    where $F$ denotes the linear filter equipped with trainable weights, and $\varphi_{\ell}:\RR\to \RR$ is an activation function that acts componentwise. In particular, graph convolution filter is defined by $$F(X)=\sum_{k=1}^{K} W_{k}XS^{k},$$ where $S$ is the graph aggregate matrix, and $W_{k}$'s are trainable weights. GNNs with convolution filters $K\leq 1$ are referred to as message passing networks, since the node representation depends exclusively on its immediate neighbors. In semi-supervised learning, Kipf and Willing \cite{kipf2017semisupervised} discussed the effectiveness of a message passing network with the ReLU activation function. In this article, we explore the mathematical properties of message passing network optimization using linear activation functions. 
    
    \begin{defin}[Linear GNN]
    	Let $G=(V,E)$ be a graph and $S\in \RR^{n\times n}$ denote the corresponding graph shift matrix. The linear GNN with $H$ layers is a map $f:\RR^{d_x\times n} \to \RR^{d_y\times n}$ such that 
    	\begin{equation}
    		\label{eqn:linear_GNN}
    		f(X)=W_{H+1}X_H,\qquad \text{and } \qquad X_\ell=W_\ell X_{\ell-1}S, \quad \text{for } 1\leq \ell\leq H,
    	\end{equation}
    	where $W_1,\dots,W_{H+1}$ represent the weight matrices of the corresponding order during propagation, and $X_0$ denotes the feature matrix $X$. 
    \end{defin}
    
    The output of the linear GNN depends on the collection of weight matrices $\W=\big(W_1,W_2,\dots,W_{H+1}\big)$, and our aim is to improve the accuracy of the output by appropriately choosing the weight matrices. For simplicity, we denote the map for linear GNN by $f(\cdot\,, \W)$, where the trainable weights $W_{\ell}\in \RR^{d_{\ell}\times d_{\ell-1}}$ for $1\leq \ell\leq H+1$, $d_0=d_x$, and $d_{H+1}=d_y$.
    
    The linear GNN of layer $H$ can be viewed as an iterative model that gathers node information from $\hat{X}_{\ell-1}=X_{\ell-1}S$ from its neighbors and encodes it into a node representation $X_{\ell}=W_{\ell}\hat{X}_{\ell-1}$. More specifically, the node representation of a given node extracts contextual information from its $H$-hop neighborhood. 
    The architecture of GNNs relies on the way data is collected from neighboring nodes, namely through different forms of an aggregation matrix. The adjacency matrix of the graph is one of the most popular choices for an aggregation operator. However, the adjacency matrix aggregates the node features of all neighboring nodes, which can lead to optimization difficulty and numerical instability for graphs with large variations in the degrees of nodes. 
    One approach to address this issue is to consider $S=\hat{D}^{-1}(I_n+A)$ \cite{hamilton2017inductive}, where $\hat{D}$ is the degree matrix of the self-loop adjacency matrix $I_n+A$ of the graph $G$. Such aggregator matrix reduces computation costs by sampling a fixed number of neighbors for aggregation at random. 
    The aggregation operator $S=(I_n+A)\hat{D}^{-1}$ is considered in the PageRank algorithm \cite{brin1998pagerank,gasteiger2018predict}, which revises the ranking of a page by computing the weighted sum of the rankings of its linked pages. 
    In a citation graph, information from the most cited papers may be insufficient for categorizing the paper, as multiple articles reference these papers across a wide range of subfields. Kipf and Welling \cite{kipf2017semisupervised} consider the aggregation matrix $S=\hat{D}^{-\frac{1}{2}}(I_n+A)\hat{D}^{-\frac{1}{2}}$ for smoothing the ``graph signal". We refer the reader to \cite{dong2022improving,hamilton2020graph} for a more detailed discussion on the choice of the aggregation matrix.
    
    In order to improve the prediction of the output label data, the linear GNN is trained to minimize a loss function. Various loss functions are considered in GNN training based on different network applications. For instance, cross-entropy loss is considered for the node classification problem \cite{lee2022grafn,zhou2020towards}, Quasi-Wasserstein loss is used for the optimal transportation problem \cite{cheng2023quasi,essid2018quadratically,facca2021fast}, and mean squared loss is applied for the regression task \cite{awasthi2021convergence,pham2022neural}. 
    
    In the semi-supervised node regression problem, a GNN is not only trained to appropriately predict the accessible label data on some subset of the node set but also to efficiently predict for unlabeled nodes. In this article, we study the semi-supervised node regression problem for a linear GNN.	In particular, we discuss the following minimization problem. For a given input data matrix $X\in \RR^{d_x\times n}$ and output label matrix $Y\in \RR^{d_y\times \bar{n}}$, 
    \begin{equation}\label{eqn:trainingloss}
    	\tilde{\ll}_H=\inf_{\W}\ll(\W):=\inf_{\W}\frac{1}{m}\|f(X,\W)_{*\ii}-Y\|_{F}^2.
    \end{equation}
    We assume that the label data is known on some nodes $\ii\subset V$ with cardinality $\bar{n}$. Moreover, $f(X,\W)_{*\ii}$ denotes the predicted label data matrix on labeled nodes, and $m=\bar{n}d_y$.
    
    For any $H\in \NN$, the optimization problem defined in \eqref{eqn:trainingloss} is non-convex. However, the network $f(X,\W)_{*\ii}=W_{[1:H+1]}(XS^H)_{*\ii}$ with the factorization $$W_{[1:H+1]}=W_{H+1}\cdot W_H\cdots W_1$$ can be viewed as an overparameterization of the matrix $W_{[1:H+1]}$. The factorization imposes an addition constraint that the rank of $W_{[1:H+1]}$ is at most $k=\min\{d_x,d_1,d_2,\dots,d_H,d_y\}$. This implies, 
    $$\inf_{\W}\ll(\W)\geq \inf_{W\in \RR^{d_y\times d_x}} \frac{1}{m}\|W(XS^H)_{*\ii}-Y\|_F^2.$$
    
    \paragraph{Assumption.} We consider the linear GNN model with the hidden feature dimensions are in non-increasing order, i.e., $d_1\geq d_2\geq \cdots \geq d_{H+1}.$
    
    For a given linear operator $R:\RR^{d'}\to \RR^{d}$, the map $\pp_{R}:\RR^{d}\to \RR^{d}$ denotes an orthogonal projection onto the column space of $R$ and defined by $\pp_{R}:= R\cdot R^{\dagger},$ where $R^{\dagger}$ denotes the Moore–Penrose inverse of $R$. The projection map is used to estimate the least square solution, which subsequently helps to calculate the lower bound for \eqref{eqn:trainingloss}.
    
    \begin{exam}
    	For $W\in \RR^{d_y\times d_x}$, the loss $\ll_1(W)=\|W(XS^H)_{*\ii}-Y\|_F^2$ is a convex function, and the least square solution to the square loss $\ll_1$ can be explicitly determined. In particular, we have 
    	\begin{align*}
    		\inf_{W\in \RR^{d_y\times d_x}}\|W(XS^H)_{*\ii}-Y\|_F^2
    		&= \inf_{W\in \RR^{d_y\times d_x}}\big\| \big[ (XS^H)_{*\ii}^{\top}\otimes I_{d_y} \big]\vc{W}-\vc{Y} \big\|_2^2\\
    		&= \big\|\pp_{[ (XS^H)_{*\ii}^{\top}\otimes I_{d_y} ]}\vc{Y}-\vc{Y}\big\|_2^2.
    	\end{align*}
    	The first equality follows from a well-known result in matrix theory \cite{zhan2013matrix}, which states that for any $P\in \RR^{d_{P}\times d_{P}'}$, $Q\in \RR^{d_{Q}\times d_{Q}'}$ and $X\in \RR^{d_{P}'\times d_{Q}}$,
    	\begin{equation}
    		\label{eqn:vec}
    		\vc{PXQ}=\big(Q^{\top}\otimes P\big)\vc{X},
    	\end{equation}
    	where $\otimes$ denotes the Kronecker product of matrices.
    \end{exam}
    
    		%
    %
    %
    		%

    \subsection{Preliminaries on matrix theory}
    Graph Neural Networks (GNNs) are trained to minimize the prediction loss $\ll(\W)$. The primary goal of this paper is to investigate the convergence of gradient-based training for linear GNNs to estimate $\tilde{\W}$, the parameter that optimizes the loss $\ll$. To delve deeper into the training of linear GNNs, we first recall some preliminary results from matrix theory.
    
	The following lemmas outline basic properties of full rank matrices, which are used in the main result on gradient flow convergence. While these results are straightforward to derive, we provide complete proofs to ensure our exposition is self-contained.
	
	\begin{lemma}\label{lemma:sgmin_sum}
		Let $P\in \RR^{d\times d'}$ be a full rank matrix. 
        Then, for any matrix $\tilde{P}\in \RR^{d\times d'}$ with Frobenius norm $\|\tilde{P}\|_F<\sgmin(P),$ the matrix $(P+\tilde{P})$ is also full rank. Moreover, the smallest singular value of the sum of the matrices $(P+\tilde{P})$ is bounded below by $\big( \sgmin(P)-\|\tilde{P}\|_F \big)$.
        
        Here, $\sgmin(M)$ denotes the smallest singular value of the matrix $M$. 

	\end{lemma}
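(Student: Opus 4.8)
The plan is to control $\sgmin(P+\tilde P)$ through the variational characterization of the smallest singular value together with the reverse triangle inequality. Recall that for a matrix $M\in\RR^{d\times d'}$ of full column rank (the case $d\geq d'$), one has the identity $\sgmin(M)=\min_{\|x\|_2=1}\|Mx\|_2$, the minimum being taken over unit vectors $x\in\RR^{d'}$. I would first reduce to this case: since the nonzero singular values of $M$ and $M^{\top}$ coincide and the Frobenius norm is invariant under transposition, the wide case $d<d'$ (full row rank) follows from the tall case applied to $P^{\top}$ and $\tilde P^{\top}$. Thus it suffices to treat $d\geq d'$.

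Next, for an arbitrary unit vector $x\in\RR^{d'}$, I would apply the triangle inequality to obtain $\|(P+\tilde P)x\|_2\geq \|Px\|_2-\|\tilde P x\|_2$. The first term is bounded below by $\sgmin(P)$ by the variational identity, while the second satisfies $\|\tilde P x\|_2\leq \|\tilde P\|_2\leq\|\tilde P\|_F$, where the last step uses the standard bound of the spectral norm by the Frobenius norm. Combining these gives $\|(P+\tilde P)x\|_2\geq \sgmin(P)-\|\tilde P\|_F$ uniformly in $x$.

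By hypothesis $\|\tilde P\|_F<\sgmin(P)$, so the right-hand side is strictly positive; hence $(P+\tilde P)x\neq 0$ for every nonzero $x$, which shows $(P+\tilde P)$ has trivial kernel and is therefore of full column rank, i.e., full rank. Finally, applying the variational identity to $(P+\tilde P)$ and taking the minimum over unit vectors $x$ yields $\sgmin(P+\tilde P)=\min_{\|x\|_2=1}\|(P+\tilde P)x\|_2\geq \sgmin(P)-\|\tilde P\|_F$, which is exactly the claimed lower bound.

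I do not anticipate a serious obstacle here; the only points requiring care are the uniform reduction from the wide to the tall case via transposition and the clean use of the variational formula, which is valid precisely because $P$ is full rank. Equivalently, the entire statement is an instance of Weyl's perturbation inequality $|\sigma_i(P+\tilde P)-\sigma_i(P)|\leq\|\tilde P\|_2$ for singular values, but I would favor the elementary self-contained argument above to keep the exposition independent of that theorem.
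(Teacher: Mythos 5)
Your proof is correct, but it takes a more self-contained route than the paper. The paper's proof is essentially a one-line citation: it invokes Weyl's perturbation inequality for singular values, $\sigma_i(P+\tilde{P})\geq \sigma_i(P)-\|\tilde{P}\|$, together with $\|\tilde{P}\|\leq\|\tilde{P}\|_F$, and reads off the conclusion for $i$ corresponding to the smallest singular value. You instead prove exactly the instance of Weyl's inequality that is needed: after reducing the wide case $d<d'$ to the tall case by transposition (legitimate, since singular values and the Frobenius norm are transpose-invariant), you combine the variational characterization $\sgmin(M)=\min_{\|x\|_2=1}\|Mx\|_2$ with the reverse triangle inequality and $\|\tilde{P}\|_2\leq\|\tilde{P}\|_F$ to get the uniform lower bound, from which both the full-rank claim (trivial kernel) and the singular-value bound follow. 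What your approach buys is independence from an external theorem, matching the paper's stated aim of a self-contained exposition, at the cost of a slightly longer argument and of yielding only the bound for $\sgmin$ rather than for all singular values (which is all the lemma requires). One small imprecision: the identity $\sgmin(M)=\min_{\|x\|_2=1}\|Mx\|_2$ is valid for any $M\in\RR^{d\times d'}$ with $d\geq d'$, whether or not $M$ is full rank (rank deficiency simply makes both sides zero); it is the shape condition, not full rank, that matters, and this is in fact what licenses applying the identity to $P+\tilde{P}$ before its rank is known. This does not affect the correctness of your argument.
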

    The lemma precisely states that every matrix in an open neighborhood of a full rank matrix is also a full rank. In particular, the set of all full rank matrices of the same order is an open set with respect to the Frobenius norm.
	
	\begin{proof}
		The proof of the lemma easily follows from the Weyl's inequality for the singular value of matrices \cite{johnson1985matrix}. For any $P, \tilde{P}\in \RR^{d\times d'}$, we have
		\begin{align}
			\label{eqn:sgmin_sum}
			\sigma_i(P+\tilde{P})\geq \sigma_i(P)-\|\tilde{P}\|, \qquad \text{and} \qquad \|\tilde{P}\|\leq \|\tilde{P}\|_F,
		\end{align}
		where $\sigma_i(P)$ and $\|\tilde{P}\|$ denote the $i$-th singular value of $P$ and the spectral norm of $\tilde{P}$ respectively.
	\end{proof}
	
	A full rank matrix preserves the structural information of the input data and avoids the loss of dimensionality redundancy. In a linear deep neural network, data are processed with linear transformation at each layer. A low rank transformation at certain layer might lose critical information about the data. The following lemma emphasizes the critical role of full rank matrices in maintaining structural integrity and minimizing information loss during successive linear transformations.
	
	\begin{lemma}\label{lemma:sgmin_product}
		Suppose $P\in \RR^{d\times d'}$ and $Q\in \RR^{d'\times d''}$ are full rank matrices, with $d\geq d'\geq d''$. 
        Then, the product $PQ$ is a full rank matrix and $$\sgmin(PQ)\geq \sgmin(P)\sgmin(Q).$$ 
	\end{lemma}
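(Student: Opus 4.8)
The plan is to use the variational characterization of the smallest singular value, which is available here precisely because the dimension ordering $d \geq d' \geq d''$ forces all three matrices $P$, $Q$, and $PQ$ to be tall, i.e., to admit full column rank. For a matrix $M \in \RR^{m\times n}$ with $m \geq n$ one has
\[
\sgmin(M) = \min_{\substack{x\in\RR^{n}\\ \|x\|_2 = 1}} \|Mx\|_2,
\]
equivalently $\|Mx\|_2 \geq \sgmin(M)\,\|x\|_2$ for every $x$. First I would record this inequality for each of $P$ (which is $d\times d'$ with $d\geq d'$), $Q$ (which is $d'\times d''$ with $d'\geq d''$), and $PQ$ (which is $d\times d''$ with $d\geq d''$).

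Next I would fix an arbitrary unit vector $x\in\RR^{d''}$ and set $y := Qx \in \RR^{d'}$. The lower bound applied to $Q$ gives $\|y\|_2 = \|Qx\|_2 \geq \sgmin(Q)$, and the lower bound applied to $P$ at the vector $y$ gives
\[
\|PQx\|_2 = \|Py\|_2 \geq \sgmin(P)\,\|y\|_2 \geq \sgmin(P)\,\sgmin(Q).
\]
Since this holds for every unit $x$, taking the minimum over $x$ and invoking the variational formula for $PQ$ yields the claimed bound $\sgmin(PQ) \geq \sgmin(P)\,\sgmin(Q)$.

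Finally, because $P$ and $Q$ are full rank we have $\sgmin(P) > 0$ and $\sgmin(Q) > 0$, so the chain of inequalities forces $\sgmin(PQ) > 0$; a $d\times d''$ matrix with positive smallest singular value has full column rank $d''$, hence $PQ$ is full rank. There is no serious obstacle in the argument: the only point needing care is confirming that the variational formula computes the genuine smallest singular value rather than a spurious zero, which is exactly why the hypothesis $d \geq d' \geq d''$ is essential. Without it one of the matrices could be wide, its nontrivial kernel would make $\min_{\|x\|_2=1}\|Mx\|_2 = 0$, and the multiplicative lower bound would collapse.
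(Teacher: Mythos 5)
Your proof is correct, and on the main inequality it coincides with the paper's argument: both fix a vector $x\in\RR^{d''}$ and run the chain $\|PQx\|_2=\|P(Qx)\|_2\geq \sgmin(P)\|Qx\|_2\geq \sgmin(P)\sgmin(Q)\|x\|_2$, which is legitimate precisely because the ordering $d\geq d'\geq d''$ makes $P$, $Q$, and $PQ$ tall, so $\sgmin$ admits the variational characterization you invoke. The one place you diverge is the full-rank claim: the paper establishes $\mathrm{rank}(PQ)=d''$ separately and beforehand via Sylvester's rank inequality, $\mathrm{rank}(PQ)\geq \mathrm{rank}(P)+\mathrm{rank}(Q)-d'=d''$, whereas you obtain it as a corollary of the bound just proved, since $\sgmin(PQ)\geq \sgmin(P)\sgmin(Q)>0$ forces $PQ$ to have trivial kernel and hence full column rank. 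Your route is slightly more economical, deriving both conclusions from a single argument; the paper's route keeps the rank statement as a standalone algebraic fact independent of any norm estimate, but the two are otherwise the same proof.
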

	
	\begin{proof}
		The fact that $P$ and $Q$ are full rank matrices implies that $\text{rank}(P)=d'$, and $\text{rank}(Q)=d''$. 
        Using Sylvester's rank inequality, we get $PQ$ is full rank as
		\begin{align*}
			d''\geq \text{rank}(PQ)\geq \text{rank}(P)+\text{rank}(Q)-d'=d''.
		\end{align*}
        Let $x\in \RR^{d''}$ be arbitrary non-zero column matrix. Then $x$ is in row space of $Q$, and $Qx$ is in row space of $P$. Hence, 
		\begin{align*}
			\|PQx\|_2\geq \sgmin(P)\|Qx\|_2\geq \sgmin(P)\sgmin(Q)\|x\|_2.
		\end{align*}
		This implies, $\sgmin(PQ)\geq \sgmin(P)\sgmin(Q.)$. 
	\end{proof}
	
	
	\begin{lemma}
		\label{lemma:small_singular}
		Let $R\in \RR^{d\times d'}$ be a real matrix. For every $x\in \RR^{d'}$, we have
		$$\|Rx\|_2\geq \sgs(R)\,\|\pp_{R^{\top}}x\|_2,$$ where $\sgs(R)$ denotes the smallest non-zero singular value of $R$.
	\end{lemma}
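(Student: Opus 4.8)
The plan is to split $x$ according to the orthogonal decomposition $\RR^{d'}=\text{col}(R^{\top})\oplus \ker(R)$ and to observe that $R$ annihilates the kernel component while acting on the row-space component with a controlled lower bound coming from the singular value decomposition.

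First I would record that $\pp_{R^{\top}}=R^{\top}(R^{\top})^{\dagger}$ is the orthogonal projection onto $\text{col}(R^{\top})$, which is precisely the row space of $R$; consequently $(I-\pp_{R^{\top}})$ projects onto $\ker(R)=\big(\text{col}(R^{\top})\big)^{\perp}$. Writing $x=\pp_{R^{\top}}x+(I-\pp_{R^{\top}})x$ and using $R(I-\pp_{R^{\top}})x=0$, I obtain $Rx=R\,\pp_{R^{\top}}x$. Thus it suffices to prove the inequality $\|Ry\|_2\geq \sgs(R)\,\|y\|_2$ for the vector $y:=\pp_{R^{\top}}x$, which lies in the row space of $R$.

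To finish, I would invoke the singular value decomposition $R=U\Sigma V^{\top}$ with $r=\text{rank}(R)$, so that the right singular vectors $v_1,\dots,v_r$ associated with the nonzero singular values $\sigma_1\geq\cdots\geq\sigma_r=\sgs(R)$ form an orthonormal basis of $\text{col}(R^{\top})$. Expanding $y=\sum_{i=1}^{r}c_i v_i$ gives $Ry=\sum_{i=1}^{r}c_i\sigma_i u_i$, whence $\|Ry\|_2^2=\sum_{i=1}^{r}\sigma_i^2 c_i^2\geq \sgs(R)^2\sum_{i=1}^{r}c_i^2=\sgs(R)^2\|y\|_2^2$. Combining this with $Rx=Ry$ and $y=\pp_{R^{\top}}x$ yields the claim.

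I do not anticipate a genuine obstacle, as the statement is a standard consequence of the SVD; the only point requiring care is the identification of the range of $\pp_{R^{\top}}$ with the row space of $R$ (equivalently, that $\ker(R)$ is exactly the orthogonal complement of $\text{col}(R^{\top})$), which is what guarantees both that the kernel component of $x$ is killed and that the expansion of $y$ runs only over the top $r$ right singular vectors.
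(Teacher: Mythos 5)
Your proof is correct and takes essentially the same route as the paper: both arguments reduce to the singular value decomposition $R=U\Sigma V^{\top}$ and bound the action of $R$ on the row space from below by $\sgs(R)$, the paper doing so via the quadratic-form inequality $x^{\top}V\Sigma^{\top}\Sigma V^{\top}x\geq \sgs^2(R)\,x^{\top}V\Sigma^{\top}(\Sigma^{\dagger})^{\top}V^{\top}x = \sgs^2(R)\|\pp_{R^{\top}}x\|_2^2$, while you make the same computation explicit through the decomposition $x=\pp_{R^{\top}}x+(I-\pp_{R^{\top}})x$ and a basis expansion in the right singular vectors. The two presentations are interchangeable, and your identification of $\mathrm{range}(\pp_{R^{\top}})$ with the row space of $R$ is exactly the fact the paper encodes in the identity $\pp_{R^{\top}}=V\Sigma^{\top}(\Sigma^{\dagger})^{\top}V^{\top}$.
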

	\begin{proof}
		Let $R=U\Sigma V^{\top}$ be the singular value decomposition of $R$, where $U\in \RR^{d\times d}$ and $V\in \RR^{d'\times d'}$ are the orthonormal matrices, and $\Sigma\in \RR^{d\times d'}$ is the rectangular diagonal matrix with singular values of $R$ as the diagonal entries.
		
		Let $x\in \RR^{d'}$ be arbitrary. Then we have
		\begin{align*}
			\|Rx\|_2^2&=\|\Sigma V^{\top}x\|_2^2\\
			&= x^{\top}V\Sigma^{\top}\Sigma V^{\top}x\\
			&\geq \sigma_{k}^2(R)\,x^{\top}V\Sigma^{\top}(\Sigma^{\dagger})^{\top} V^{\top}x\\
			&= \sigma_{k}^2(R)\, \langle \pp_{R^{\top}}x, x \rangle = \sigma_{k}^2(R)\,\|\pp_{R^{\top}}x\|_2^2.
		\end{align*}
		
		This completes the proof.
	\end{proof}
	
	\section{Convergence analysis of gradient flow training}\label{sec:grad_flow}
	
	
	
	
	In this section, we discuss the gradient flow training of linear GNNs and show that the gradient flow (gradient descent with infinitesimal steps) of means square loss $\ll$ converges to the global minimum. The gradient flow training of linear GNNs evolves the weight matrices at time $t$ as follows
	\begin{equation}
		\label{eqn:gradient_flow}
		\frac{dW_{\ell}(t)}{dt}=-\frac{\partial \ll\big(\W(t)\big)}{\partial W_{\ell}}, \qquad 1\leq \ell\leq H+1,
	\end{equation}
	where $W_{\ell}(t)$ represents the trainable parameters at time $t$ with initialization $W_{\ell}(0)$ and $\W(t)=\big(W_{1}(t),W_{2}(t),\dots,W_{H+1}(t)\big)$.
	
	The optimization of gradient flow training for the classical neural network has been studied in \cite{arora2018convergence,bah2022learning,chatterjee2022convergence,yacine2023geometric} and has been generalized to GNNs \cite{awasthi2021convergence,xu2021optimization}. The main result of the paper regarding gradient flow training of linear GNNs is stated as follows.
	
	\begin{thm}\label{thm:gradient_flow}
		Let $G=(V,E)$ be a graph embedded with a feature matrix $X\in \RR^{d_x\times n}$, and a labeled matrix $Y\in \RR^{d_y\times \bar{n}}$. Let us consider a linear GNN with $H$ layers and non-increasing hidden feature dimensions. The gradient flow training \eqref{eqn:gradient_flow} of the linear GNN with the loss function $\ll$ defined in \eqref{eqn:trainingloss} converges to the global minimum under an appropriately chosen initialization.
		
		In particular, suppose the initial weight $W_1(0)$ is the zero matrix and $W_{\ell}(0)$ is full rank for $2\leq \ell\leq H+1$. Then, with the initialization $\W(0)$:
		\begin{align*}
			\ll\big(\bm{W}(T)\big)-\tilde{\ll}_H&\leq e^{-\frac{1}{m}\beta\sgs^2((XS^H)_{*\ii})T}\big( \ll(\W(0))-\tilde{\ll}_H \big),
		\end{align*} 
		provided $\sgmin\big(W_{\ell}(0)\big)$ is sufficiently large for some $2\leq \ell\leq H+1$. Here $\sgs((XS^H)_{*\ii})$ denotes the smallest non-zero singular value of the matrix $(XS^H)_{*\ii}$, $\beta=\frac{1}{4^{H-1}}\prod_{\ell=2}^{H+1} \sgmin^2(W_{\ell}(0))$ and $m=\bar{n}d_y.$
	\end{thm}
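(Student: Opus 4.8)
The plan is to reduce the whole statement to a single differential inequality for the excess loss and integrate it by Grönwall, the only delicate point being a bootstrap that keeps the weight factors well conditioned along the entire trajectory. First I would unroll the network as $f(X,\W)=W_{[1:H+1]}(XS^H)$, abbreviate $M=(XS^H)_{*\ii}$ and write the residual $R=W_{[1:H+1]}M-Y$, so that $\ll(\W)=\tfrac1m\|R\|_F^2$. Because the hidden dimensions are non-increasing, the minimum over the product $k=\min\{d_x,d_1,\dots,d_H,d_y\}$ equals $\min\{d_x,d_y\}$, i.e.\ the factorization imposes no rank restriction beyond the trivial one on a $d_y\times d_x$ matrix; hence $\tilde\ll_H$ coincides with the unconstrained least-squares value. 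Splitting $R$ into its component in the row space of $M$ and its orthogonal complement then gives the clean identity $\ll(\W)-\tilde\ll_H=\tfrac1m\|R\,\pp_{M^\top}\|_F^2$, and this is the quantity I will show decays exponentially.

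Next I would differentiate the loss along the flow \eqref{eqn:gradient_flow} to obtain $\tfrac{d\ll}{dt}=-\sum_{\ell}\big\|\partial\ll/\partial W_\ell\big\|_F^2$, and then keep only the $\ell=1$ term, for which $\partial\ll/\partial W_1=\tfrac2m\,W_{[2:H+1]}^\top R\,M^\top$ with $W_{[2:H+1]}=W_{H+1}\cdots W_2$. I would bound this from below in two moves. Applying Lemma~\ref{lemma:small_singular} column-wise to $M R^\top$ replaces the trailing $M^\top$ by $\sgs(M)\,\|R\,\pp_{M^\top}\|_F$, which handles the possible low rank of $M$; and since non-increasing dimensions make $W_{[2:H+1]}^\top$ full column rank, Lemma~\ref{lemma:sgmin_product} lets me pull out $\sgmin(W_{[2:H+1]})\ge\prod_{\ell=2}^{H+1}\sgmin(W_\ell)$. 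Squaring and using $\|R\,\pp_{M^\top}\|_F^2=m(\ll-\tilde\ll_H)$ yields the differential inequality
\[
\frac{d}{dt}\big(\ll-\tilde\ll_H\big)\ \le\ -\frac{4}{m}\,\sgmin^2\!\big(W_{[2:H+1]}(t)\big)\,\sgs^2(M)\,\big(\ll-\tilde\ll_H\big).
\]

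The rate constant still depends on the evolving factor $W_{[2:H+1]}(t)$, so the crux of the argument—and the step I expect to be the main obstacle—is to show $\sgmin(W_\ell(t))\ge\tfrac12\sgmin(W_\ell(0))$ for every $\ell\ge2$ and all $t\ge0$. Combined with Lemma~\ref{lemma:sgmin_product} this gives $\sgmin^2(W_{[2:H+1]}(t))\ge 4^{-H}\prod_{\ell=2}^{H+1}\sgmin^2(W_\ell(0))$, and the prefactor $4$ above upgrades $4^{-H}$ to $\tfrac1{4^{H-1}}$, i.e.\ exactly $\beta$. I would establish the conditioning bound by a continuity/bootstrap argument on the maximal interval where it holds: there the inequality above forces exponential decay, so each gradient obeys $\|\partial\ll/\partial W_\ell\|_F\le C\sqrt{\ll-\tilde\ll_H}$ and decays like $e^{-ct/2}$, whence the displacement $\|W_\ell(t)-W_\ell(0)\|_F\le\int_0^t\|\dot W_\ell\|_F\,ds$ stays uniformly bounded; Lemma~\ref{lemma:sgmin_sum} then converts a small displacement into the required singular-value lower bound. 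The difficulty is twofold: the constant $C$ contains operator norms of weight products, which must be bootstrapped from above simultaneously, and one must make the total displacement strictly smaller than the budget $\tfrac12\sgmin(W_\ell(0))$. This is where the hypothesis that $\sgmin(W_\ell(0))$ is large for some layer enters — the role of the initialization $W_1(0)=0$ is that every $\partial\ll/\partial W_\ell$ with $\ell\ge2$ vanishes at $t=0$, so these weights move slowly, and scaling one weight up by $\rho$ makes the decay rate grow like $\rho^2$ while each competing gradient grows only like $\rho$, forcing every displacement to scale like $1/\rho$ and fall below its budget, which closes the bootstrap.

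Finally, with the conditioning bound valid on all of $[0,\infty)$, the differential inequality becomes $\tfrac{d}{dt}(\ll-\tilde\ll_H)\le-\tfrac1m\beta\,\sgs^2(M)\,(\ll-\tilde\ll_H)$, and Grönwall's inequality delivers the claimed estimate $\ll(\W(T))-\tilde\ll_H\le e^{-\frac1m\beta\,\sgs^2(M)\,T}\big(\ll(\W(0))-\tilde\ll_H\big)$.
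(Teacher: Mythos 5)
Your proposal is correct, and its analytic core is exactly the paper's: you bound the loss derivative from below using only the $W_1$-component of the gradient, pull out $\sgmin\big(W_{H+1}\cdots W_2\big)\geq\prod_{\ell\geq 2}\sgmin(W_\ell)$ via Lemma \ref{lemma:sgmin_product}, handle the possible rank deficiency of $M=(XS^H)_{*\ii}$ via Lemma \ref{lemma:small_singular}, and keep every factor well conditioned on a ball of radius $\tfrac12\sgmin(W_\ell(0))$ via Lemma \ref{lemma:sgmin_sum}; your identity $\ll(\W)-\tilde{\ll}_H=\tfrac1m\|R\,\pp_{M^\top}\|_F^2$ is the matrix-form counterpart of the paper's Kronecker/vectorization computation, and your bookkeeping of constants ($4\cdot 4^{-H}=4^{-(H-1)}$, hence $\beta$) matches. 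Where you genuinely diverge is the convergence mechanism. The paper does not run a Gr\"onwall-plus-bootstrap argument: it verifies the single inequality $4L(\W(0))\leq r^2\inf\big\{|\nabla L(\W)|^2/L(\W):\vc{\W}\in B(\vc{\W(0)},r)\big\}$, i.e.\ \eqref{eqn:alphabdd}, and then invokes Chatterjee's local convergence theorem (Theorem \ref{thm:sourav}, from \cite{chatterjee2022convergence}) as a black box, which delivers global existence, confinement to the ball, and the exponential rate all at once. What that citation buys is precisely the step you flag as ``the main obstacle'': in Chatterjee's argument the trajectory length is controlled by $\int_0^\infty|\nabla L|\,dt\leq\int_0^\infty |\nabla L|^2/\sqrt{\alpha L}\,dt\leq 2\sqrt{L(\W(0))/\alpha}<r$, using \emph{only} the lower bound $|\nabla L|^2/L\geq\alpha$ on the ball — no upper bounds on operator norms of weight products, no per-layer constants $C_\ell$, and no simultaneous upward bootstrap are needed. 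Your $\rho$-scaling argument does close (scaling one $W_\ell(0)$, $\ell\geq2$, makes the rate grow like $\rho^2$ while every competing gradient constant grows at most like $\rho$), but it is strictly heavier bookkeeping; if you replace it by the path-length trick above, your bootstrap collapses to checking exactly the paper's condition \eqref{eqn:alphabdd}. One correction of emphasis: the essential service of $W_1(0)=0$ is not that the gradients of $W_\ell$, $\ell\geq2$, vanish at $t=0$, but that it pins $L(\W(0))=\tfrac1m\|Y\|_F^2-\tilde{\ll}_H$ independently of the scaling parameter $\rho$; without that, enlarging $\sgmin(W_\ell(0))$ would inflate the initial loss at the same rate, and the displacement-versus-budget (equivalently, \eqref{eqn:alphabdd}) comparison would not close.
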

	
	The proof of Theorem \ref{thm:gradient_flow} is based on the following local optimization result.
	\begin{thm}[\cite{chatterjee2022convergence}]\label{thm:sourav}
		Let $p$ be a positive integer, and $F:\RR^p\to [0,\infty)$ be a non-negative $C^2$ function. For any $x_0\in \RR^p$, if there exists $r>0$ such that 
		\begin{equation}
			\label{eqn:alpha_condi}
			4F(x_0)<r^2\inf_{x\in B(x_0,r), F(x)\neq 0} \frac{|\nabla F(x)|^2}{F(x)},
		\end{equation}
		then the gradient flow equation $\frac{d}{dt}\phi(t)=-\nabla F\big(\phi(t)\big)$ has a unique solution $\phi:[0,\infty)\to \RR^p$ with the initialization $\phi(0)=x_0$. Here, $B(x_0,r)$ is the closed Euclidean ball of radius $r$ centered at $x_0$.
		
		Moreover, the solution $\phi$ stays in $B(x_0,r)$ for all $t>0$, and converges to some $\tilde{x}\in B(x_0,r)$ where $F(\tilde{x})=0$. In particular, for $\displaystyle \alpha:=\inf_{x\in B(x_0,r), F(x)\neq 0} |\nabla F(x)|^2/F(x)$, and $t\geq 0$, we have $$\|\phi(t)-\tilde{x}\|_2\leq re^{-\alpha t/2}, \qquad \text{and} \qquad F\big(\phi(t)\big)\leq e^{-\alpha t}F(x_0).$$
	\end{thm}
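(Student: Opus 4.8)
The plan is to prove this as a gradient-domination (Polyak--\L ojasiewicz type) argument in which the hypothesis $4F(x_0)<r^2\alpha$ is calibrated precisely so that the total length of the gradient-flow trajectory is strictly less than $r$, which in turn guarantees the trajectory never escapes the ball on which the domination estimate is available. Since $F\in C^2$, its gradient $\nabla F$ is locally Lipschitz, so the Picard--Lindel\"of theorem yields a unique maximal solution $\phi$ with $\phi(0)=x_0$ on some interval $[0,T_{\max})$. First I would record the two elementary identities that hold along the flow: $\frac{d}{dt}F(\phi(t))=\langle \nabla F(\phi(t)),\dot\phi(t)\rangle=-|\nabla F(\phi(t))|^2$ and $|\dot\phi(t)|=|\nabla F(\phi(t))|$. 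If $F(x_0)=0$ then $x_0$ is a global minimiser with $\nabla F(x_0)=0$ and the flow is constant, so I may assume $F(x_0)>0$, which forces $\alpha>0$.

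The heart of the argument is a first-exit-time device. Let $\tau=\inf\{t\in[0,T_{\max}):\phi(t)\notin B(x_0,r)\}$. On $[0,\tau)$ the trajectory lies in the closed ball, so by the definition of $\alpha$ the domination inequality $|\nabla F(\phi(t))|^2\geq \alpha F(\phi(t))$ holds wherever $F(\phi(t))\neq 0$ (and at any point where $F$ vanishes the flow is stationary, since $F\geq 0$ makes such a point a global minimiser with vanishing gradient). On the set where $F(\phi)>0$ I would differentiate the square root, obtaining $\frac{d}{dt}\sqrt{F(\phi(t))}=-\frac{|\nabla F(\phi(t))|^2}{2\sqrt{F(\phi(t))}}\leq -\frac{\sqrt\alpha}{2}\,|\dot\phi(t)|$, where the inequality uses $|\nabla F|=|\dot\phi|$ together with $|\nabla F|/\sqrt{F}\geq\sqrt\alpha$. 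Integrating over $[0,t]$ for any $t<\tau$ bounds the arc length by $\int_0^t|\dot\phi(s)|\,ds\leq \frac{2}{\sqrt\alpha}\big(\sqrt{F(x_0)}-\sqrt{F(\phi(t))}\big)\leq \frac{2}{\sqrt\alpha}\sqrt{F(x_0)}$. This is exactly where the hypothesis enters: $4F(x_0)<r^2\alpha$ gives $\frac{2}{\sqrt\alpha}\sqrt{F(x_0)}<r$, so $\|\phi(t)-x_0\|_2\leq \int_0^t|\dot\phi|<r$ for all $t<\tau$. The strict inequality forbids the trajectory from ever reaching $\partial B(x_0,r)$, whence $\tau=T_{\max}$ and $\phi$ remains in the compact set $\overline{B(x_0,r)}$; the standard escape-to-the-boundary criterion for maximal solutions then forces $T_{\max}=\infty$.

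With global existence and confinement in hand I would harvest the three conclusions. Gr\"onwall's inequality applied to $\frac{d}{dt}F(\phi(t))\leq -\alpha F(\phi(t))$ gives $F(\phi(t))\leq e^{-\alpha t}F(x_0)$. For convergence, the arc-length bound on a tail interval $[t_1,t_2]$ yields $\|\phi(t_2)-\phi(t_1)\|_2\leq \frac{2}{\sqrt\alpha}\sqrt{F(\phi(t_1))}\leq \frac{2}{\sqrt\alpha}e^{-\alpha t_1/2}\sqrt{F(x_0)}\to 0$, so $\{\phi(t)\}$ is Cauchy and converges to some $\tilde x\in\overline{B(x_0,r)}$; continuity together with the exponential decay forces $F(\tilde x)=0$. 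Finally, letting $t_2\to\infty$ in the same tail estimate gives $\|\phi(t)-\tilde x\|_2\leq \frac{2}{\sqrt\alpha}\sqrt{F(\phi(t))}\leq r\,e^{-\alpha t/2}$, which is the claimed rate, and $F(\phi(t))\leq e^{-\alpha t}F(x_0)$ completes the statement.

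The main obstacle is the bootstrapping circularity in the second step: the domination inequality $|\nabla F|^2\geq\alpha F$ is available only inside the ball, yet it must be used to prove that the trajectory never leaves the ball. The resolution is the first-exit-time argument combined with the observation that the length bound is a \emph{strict} inequality $<r$, so the trajectory is pushed away from the boundary rather than merely trapped in the closed ball; this is precisely what the factor $4$ (equivalently the condition $2\sqrt{F(x_0)/\alpha}<r$) is engineered to deliver. A secondary technical point requiring care is the differentiation of $\sqrt{F(\phi(t))}$ at points where $F(\phi(t))=0$, where the expression is formally singular; this is handled by noting that such points are stationary for the flow, so the estimate extends to them by continuity.
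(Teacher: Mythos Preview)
The paper does not actually prove this theorem; it is stated with attribution to \cite{chatterjee2022convergence} and then invoked as a black box in the proof of Theorem~\ref{thm:gradient_flow}. Your argument is correct and self-contained: the key mechanism---bounding the arc length by
\[
\int_0^t|\dot\phi(s)|\,ds\;\leq\;\frac{2}{\sqrt\alpha}\sqrt{F(x_0)}\;<\;r
\]
via the square-root differentiation $\frac{d}{dt}\sqrt{F(\phi)}\leq -\tfrac{\sqrt\alpha}{2}|\dot\phi|$, and then closing the bootstrap with a first-exit-time argument---is the natural Polyak--\L ojasiewicz proof of this kind of statement, and your handling of the two technical points (the strict inequality preventing boundary contact, and stationarity at zeros of $F$) is sound. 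Since the paper contains no proof to compare against, there is nothing further to contrast.
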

	
	The above theorem characterize the convergence of gradient flow for the twice differentiable non-negative functions. The inequality \eqref{eqn:alpha_condi} ensure the absence of saddle points in the neighborhood of the initialization and the gradient flow always stays in that neighborhood. Moreover, the initial points is appropriately chosen close to the global minimum. In the following, utilizing the idea of the Theorem \ref{thm:sourav}, we prove the convergence result for the linear GNNs.
	
	\begin{proof}[Proof of Theorem \ref{thm:gradient_flow}]
		The idea of the convergence result is as follows.
		\begin{itemize}
			\item First consider the initial weights such that there does not exist any critical points other than the points of global minimum in the neighborhood of initial weights. In other words, if $\W(0)$ denotes the collection of initial weights then $$\inf \bigg\{ \frac{|\nabla L(\W)|^2}{L(\W)}: \sum_{\ell=1}^{H+1} \|W_{\ell}-W_{\ell}(0)\|_F^2\leq r^2 \bigg\}>0,$$ where $|\nabla L(\W)|^2=\sum\limits_{\ell=1}^{H+1} \|\nabla_{W_{\ell}} L(\W)\|_F^2,$ and the loss function $L$ is defined by 
			\begin{equation}
				\label{eqn:L_loss}
				L(\W)=\ll(\W)-\tilde{\ll}_H.
			\end{equation}
			
			\item Second steps of the prove is to ensure that the gradient flow always stay in that neighborhood of the initial weights. In particular, the analogous condition of \eqref{eqn:alpha_condi} for the linear GNNs, i.e., $$4L\big(\W(0)\big)<r^2\inf \bigg\{ \frac{|\nabla L(\W)|^2}{L(\W)}: \sum_{\ell=1}^{H+1} \|W_{\ell}-W_{\ell}(0)\|_F^2\leq r^2 \bigg\}.$$
		\end{itemize}
		
		Let $\W(0)$ be the collection of the initial weights such that $W_{1}(0)\in \RR^{d_1\times d_x}$ is a zero matrix, and $W_{\ell}(0)\in \RR^{d_{\ell}\times d_{\ell-1}}$ is full rank for $2\leq \ell\leq H+1$. 
		
		The lower bound of $\frac{\|\nabla_{\vc{W_1}} L(\W)\|_2^2}{L(\W)}$ in the neighborhood of the initial weights is a lower bound of $\frac{|\nabla L(\W)|^2}{L(\W)}$ as well. Hence, for any collection of weights $\W$, $\nabla_{\vc{W_1}} L(\W)$ is estimated by
		\begin{multline*}
			\nabla_{\vc{W_1}} L(\W)=\left( \frac{\partial L}{\partial \textup{vec}(\hat{Y})}\frac{\partial \textup{vec}(\hat{Y})}{\partial \vc{(X_H)_{*\ii}}}\frac{\partial \vc{(X_H)_{*\ii}}}{\partial \vc{X_{H-1}}}\frac{\partial \vc{X_{H-1}}}{\partial \vc{X_{H-2}}}\cdots\frac{\partial \vc{X_{1}}}{\partial \vc{W_1}} \right)^{\top}
		\end{multline*}
		where
		\begin{align*}
			\hat{Y}&=f(X,\W)_{*\ii}=W_{H+1}(X_H)_{*\ii}\in \RR^{d_y\times \bar{n}},\\
			(X_H)_{*\ii}&= W_{\ell}X_{H-1}(S)_{*\ii}\in \RR^{d_H\times \bar{n}},\\
			X_{\ell}&=W_{\ell}X_{\ell-1}S\in \RR^{d_{\ell}\times n}, \qquad \text{for } 1\leq \ell\leq H-1.
		\end{align*}		
		
		The partial derivative can be calculated by the relation \eqref{eqn:vec}, and we get
		\begin{align*}
			\nabla_{\vc{W_1}} L(\W) &= \bigg( \frac{\partial L}{\partial \textup{vec}(\hat{Y})} \big[I_{\bar{n}}\otimes W_{H+1}\big]\big[(S)_{*\ii}^{\top}\otimes W_{H}\big]\big[S^{\top}\otimes W_{H-1}\big]\cdots \bigg. \\
			& \bigg. \hspace{6cm}  \big[S^{\top}\otimes W_2\big]\big[(XS)^{\top}\otimes I_{d_1}\big] \bigg)^{\top}\\
			&=  \frac{2}{m}\,\big[(XS^H)_{*\ii}\otimes W_2^{\top}W_{3}^{\top}\cdots W_{H+1}^{\top}\big]\text{vec}(\hat{Y}-Y)\\
			\nabla_{\vc{W_1}} L(\W)&= \frac{2}{m}\,\big[I_{d_x}\otimes W_2^{\top}W_{3}^{\top}\cdots W_{H+1}^{\top}\big]\vc{(\hat{Y}-Y)(XS^H)_{*\ii}^{\top}}.\numberthis\label{eqn:gradient_W1}
		\end{align*}
		The product $W_2^{\top}W_{3}^{\top}\cdots W_{H+1}^{\top}\in \RR^{d_1\times d_y}$ and $d_1\geq d_y$ implies
		\begin{align*}
			\|\nabla_{\vc{W_1}} L(\W)\|_2^2	&\geq \frac{4}{m^2}\, \sgmin^2\big( W_2^{\top}W_{3}^{\top}\cdots W_{H+1}^{\top} \big)\big\|\text{vec}\big( (\hat{Y}-Y)(XS^H)_{*\ii}^{\top} \big)\big\|_2^2\\
			&\geq \frac{4}{m^2}\, \sgmin^2\big( W_2^{\top}W_{3}^{\top}\cdots W_{H+1}^{\top} \big)\big\| \big[ (XS^H)_{*\ii}\otimes I_{d_y} \big]\text{vec}(\hat{Y}-Y) \big\|_2^2. \numberthis\label{eqn:lowerbound}
		\end{align*}
		
		Since $W_{\ell}(0)\in \RR^{d_{\ell}\times d_{\ell-1}}$ is a full rank with $d_{\ell}\leq d_{\ell-1}$ for $2\leq \ell\leq H+1$, then Lemma \ref{lemma:sgmin_sum} implies for each $W_{\ell}\in \RR^{d_{\ell}\times d_{\ell-1}}$ with $$\|W_{\ell}-W_{\ell}(0)\|_F\leq r_{\ell}:=\frac{\sgmin\big(W_{\ell}(0)\big)}{2}, \qquad 2\leq \ell\leq H+1,$$ $W_{\ell}$ is full rank with $\sgmin(W_{\ell})\geq \sgmin\big(W_{\ell}(0)\big)/2$.
		Moreover, Lemma \ref{lemma:sgmin_product} implies $\big(W_2^{\top}W_{3}^{\top}\cdots W_{H+1}^{\top}\big)$ is full rank with
		\begin{align}
			\label{eqn:sgmin_weights}
			\sgmin^2\big(W_2^{\top}W_{3}^{\top}\cdots W_{H+1}^{\top}\big)\geq \prod_{\ell=2}^{H+1} \sgmin^2(W_{\ell})\geq \frac{1}{4^H} \prod_{\ell=2}^{H+1} \sgmin^2\big(W_{\ell}(0)\big).
		\end{align}
		Now, the Lemma \ref{lemma:small_singular} implies
		\begin{align*}
			\big\| \big[ (XS^H)_{*\ii}\otimes I_{d_y} \big]\text{vec}(\hat{Y}-Y) \big\|_2^2 &\geq \sgs^2\big([ (XS^H)_{*\ii}\otimes I_{d_y} ]\big)\|\pp_{[(XS^H)_{*\ii}^{\top}\otimes I_{d_y}]}\text{vec}(\hat{Y}-Y)\|_2^2\\
			&=\sgs^2\big((XS^H)_{*\ii}\big)\big\|\text{vec}(\hat{Y})-\pp_{[(XS^H)_{*\ii}^{\top}\otimes I_{d_y}]}\text{vec}(Y)\big\|_2^2\\
			&\geq \sgs^2\big((XS^H)_{*\ii}\big)\Big(\|\text{vec}(\hat{Y}-Y)\|_2^2\\
			&\hspace{1.5in}-\big\|\text{vec}(Y)-\pp_{[(XS^H)_{*\ii}^{\top}\otimes I_{d_y}]}\text{vec}(Y)\big\|_2^2\Big)\\
			&\geq m\sgs^2\big((XS^H)_{*\ii}\big)\big(\ll(\bm{W})-\tilde{\ll}_H\big).
		\end{align*}
		Here $\pp_{[(XS^H)_{*\ii}^{\top}\otimes I_{d_y}]}$ denote the orthogonal projection onto column space of $\big[ (XS^H)_{*\ii}^{\top}\otimes I_{d_y} \big]$.

		Let $\vc{\W}\in \RR^p$ denotes the collection $\W$ in vector form and $B\big(\vc{\W(0)},r\big)$ be the closed ball of radius $r$ centered at $\vc{\W(0)}$ in $\RR^p$, where $p= \sum\limits_{\ell=1}^{H+1} d_{\ell}d_{\ell-1}$ and $r=\min\{r_{\ell}: 2\leq \ell\leq H+1\}$. For each $\vc{\W}\in B\big(\vc{\W(0)},r\big)$, $$\|W_{\ell}-W_{\ell}(0)\|_F\leq r, \qquad \text{for } 1\leq \ell\leq H+1.$$ 
		Hence, for any $\vc{\W}\in B\big(\vc{\W(0)},r\big)$, the inequalities \eqref{eqn:lowerbound} and \eqref{eqn:sgmin_weights} implies
		\begin{align}
			\label{eqn:gnn_alpha_bnd}
			\frac{\|\nabla_{\vc{W_1}} L(\W)\|_2^2}{L(\W)} &\geq \frac{1}{m}\beta\sgs^2\big((XS^H)_{*\ii}\big)
		\end{align}
		where $\beta= \frac{1}{4^{H-1}}\prod_{\ell=2}^{H+1} \sgmin^2(W_{\ell}(0))$. 
		
		Without loss of generality, assume that $\sgmin(W_{H+1}(0))$ is sufficiently large such that $$\frac{4}{r^2}L\big(\bm{W}(0)\big)\leq \frac{1}{m}\beta \sgs^2\big((XS^H)_{*\ii}\big).$$ This implies
		\begin{equation}\label{eqn:alphabdd}
			4L\big(\bm{W}(0)\big)\leq r^2\inf \Big\{ \frac{|\nabla L(\W)|^2}{L(\W)}:\vc{\W}\in B\big(\vc{\W(0)},r\big) \Big\}.
		\end{equation}
		Consequently, Theorem \ref{thm:sourav} signifies the gradient flow started at $\bm{W}(0)$ converges to some $\tilde{\bm{W}}$ such that $L(\tilde{\bm{W}})=0$. Moreover, at time $T>0$, 
		$$\sum_{\ell=1}^{H+1}\|W_{\ell}(T)-\tilde{W}_{\ell}\|_F^2\leq r^2\,\exp\Big(-\frac{1}{m}\beta\sgs^2((XS^H)_{*\ii})T\Big),$$ and $$\ll\big(\W(T)\big)-\tilde{\ll}_H\leq e^{-\frac{1}{m}\beta \sgs^2((XS^H)_{*\ii})T}\big( \ll(\W(0))-\tilde{\ll}_H \big).$$ This completes the proof.
	\end{proof}
	
	\begin{rem}\label{rem:initialisation}
		The convergence rate of the gradient flow training of linear GNNs explicitly depends on the singular value of weight parameters. Let $\W(0)$ be the collection of initial weight matrices such that
		\begin{align*}
			W_1(0) &: \text{ zero matrix},\\
			W_{\ell}(0) &: \text{ diagonal matrix with entries are 1, for } 2\leq \ell\leq H,\\
			W_{H+1}(0) &: \text{ diagonal matrix with entries are $a$},
		\end{align*}
		then $\beta=\frac{a^2}{4^{H-1}}$. Moreover, with the initialization, the gradient flow converges to the global minimum if $$a^2\geq \max\Big\{ 1,\frac{4^{H+1}m\big(\|Y\|_F^2-\tilde{\ll}_H\big)}{\sgs^2\big((XS^H)_{*\ii}\big)} \Big\}.$$ 
	\end{rem}
	
	
	\begin{rem}
		The convergence rate of the gradient flow explicitly depends on the feature matrix $X$, graph shift matrix $S$ and the initial weight matrices. The notation $\sgs^2((XS^H)_{*\ii})$ denotes the smallest non-zero singular value of $(XS^H)_{*\ii}$, which implies that the training loss for the semi-supervised learning of graph neural network converges to the global minimum with respect to aggregate matrix $S$ even if $(XS^H)_{*\ii}$ has the rank deficiency.
	\end{rem}
	
	\section{Energy optimization in linear GNNs}
	\label{sec:min_energy}
	In this section, we discuss the solution to the global minimum of the mean square loss of the linear GNN that minimizes the total energy of the weight matrices.
	
	Let $G=(V,E)$ be a graph embedded with a feature matrix $X\in \RR^{d_x\times n}$, and a labeled matrix $Y\in \RR^{d_y\times \bar{n}}$. The linear GNN with $H$ layers and weight matrices $\W$ is defined by \eqref{eqn:linear_GNN}. In general, the global minimum of the mean square loss $\ll(\W)$ of the linear GNN satisfy
	\begin{align*}
		\min_{\W}\ll(\W) \geq& \min_{W\in \RR^{d_y\times d_x}} \frac{1}{\bar{n}d_y}\big\|W(XS^H)_{*\ii}-Y\big\|_F^2.
	\end{align*}
	The equality holds if for each $1\leq \ell\leq H$, the hidden feature dimension $d_{\ell}\geq\min\{d_x,d_y\}$.
	
	Let $W\in \RR^{d_y\times d_x}$ be arbitrary, and let the rank of $(XS^H)_{*\ii}\in \RR^{d_x\times \bar{n}}$ be $k$. Then, from the singular value decomposition of $(XS^H)_{*\ii}$ we have
	\begin{align*}
		\big\|W(XS^H)_{*\ii}-Y\big\|_F^2=&\big\|WU\Sigma V^{\top}-Y\big\|_F^2 = \|WU\Sigma-YV\|_F^2\\
		=& \sum_{i=1}^{d_y}\sum_{j=1}^{k} \Big( \sigma_j\big((XS^H)_{*\ii}\big)[WU]_{ij} - [YV]_{ij} \Big)^2+ \sum_{i=1}^{d_y}\sum_{j=k+1}^{\bar{n}} [YV]_{ij}^2,
	\end{align*}
	where $[P]_{ij}$ denotes the $i,j$-th entry of the matrix $P$. Hence, the global minimum of least square equation $\|W(XS^H)_{*\ii}-Y\|_F^2$ is attained for all $W\in \RR^{d_y\times d_x}$ such that $$\big[ WU \big]_{ij}=\frac{1}{\sigma_j((XS^H)_{*\ii})}[YV]_{ij}, \qquad \text{for all } 1\leq i\leq d_y,\, 1\leq j\leq k.$$ In particular, the global minimum solution $W$ also minimizes the total energy $\|W\|_F$ if and only if $[WU]_{ij}=0$ for $1\leq i\leq d_y$, $k+1\leq j\leq \bar{n}$. Hence, $\tilde{W}$ is the unique solution to the global minimum of $\ll$ with minimum energy and $\tilde{W}$ is given by $$\tilde{W}=Y(XS^H)_{*\ii}^{\dagger}.$$
	
	The problem of identifying the solution to the minimization of square loss with minimum energy of the weights for general linear GNN is non-trivial. Let the collection $\tilde{\W}=\big( \tilde{W}_1,\tilde{W}_2,\dots,\tilde{W}_{H+1} \big)$ be the solution to the global minimum of the square loss. Then $\tilde{\W}$ minimizes the energy of the product $\tilde{W}_{H+1}\tilde{W}_{H}\cdots\tilde{W}_{1}$ if and only if
	\begin{align}
		\label{eqn:min_weight_cond}
		\tilde{W}_{H+1}\tilde{W}_{H}\cdots\tilde{W}_{1}&=Y(XS^H)_{*\ii}^{\dagger}. 
	\end{align}
	However, for $H\geq 1$, the collection $\tilde{\W}$ might not minimize the energy of the individual weights matrices.
	
	In the following theorem, we address the optimization problem for the linear GNNs with $H$ layers defined as follows.
	\begin{equation}
		\label{eqn:energy_opti}
		\begin{aligned}
			\min_{\tilde{W}} \sum_{\ell=1}^{H+1} &\|\tilde{W}_{\ell}\|_F^2\\
			\hspace{-1in}\text{subject to } \tilde{W}_{H+1}\tilde{W}_{H}\cdots\tilde{W}_{1}&=Y(XS^H)_{*\ii}^{\dagger}.
		\end{aligned}
	\end{equation}
	
	Before answering the above optimization problem, first we define the \textit{balanced} condition on the collections of weight matrices.
	\begin{defin}
		The collection $\W=\big( W_1, W_2,\dots, W_H \big)$ is said to be \textit{balanced} if for every $1\leq \ell\leq H$, $$W_{\ell}W_{\ell}^{\top}=W_{\ell+1}^{\top}W_{\ell+1}.$$
	\end{defin}
	
	\begin{thm}
		Let $G=(V,E)$ be a graph embedded with a feature matrix $X\in \RR^{d_x\times n}$, and a labeled matrix $Y\in \RR^{d_y\times \bar{n}}$. Let us consider a linear GNN with $H$ layers and $S$ be a graph shift matrix. If the collection $\W$ is balanced and satisfy the equation \eqref{eqn:min_weight_cond}, then $\W$ is the solution to the optimization problem \eqref{eqn:energy_opti}. 
	\end{thm}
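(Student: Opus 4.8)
Write $M := Y(XS^H)_{*\ii}^{\dagger}$ and $N := H+1$, so that the constraint \eqref{eqn:min_weight_cond} reads $W_N W_{N-1}\cdots W_1 = M$. My plan is to split the argument into a universal lower bound followed by an equality check. First I would establish that for \emph{every} admissible factorization $(W_1',\dots,W_N')$ with $W_N'\cdots W_1' = M$, the objective is bounded below by a quantity depending only on the singular values of $M$, namely
\[
\sum_{\ell=1}^{N}\|W_\ell'\|_F^2 \ \ge\ N\sum_{i}\sigma_i(M)^{2/N}.
\]
I would then show that the given balanced collection $\W$ attains this bound with equality, which immediately certifies that it solves \eqref{eqn:energy_opti}. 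The point of this decoupling is that it sidesteps the nonconvexity of the feasible set: a matching lower bound and attained value prove global optimality directly, without appealing to any first- or second-order stationarity condition.

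For the lower bound I would invoke the Hölder inequality for Schatten norms of products: if $1/r = \sum_\ell 1/p_\ell$ then $\|W_N'\cdots W_1'\|_{S_r}\le \prod_\ell \|W_\ell'\|_{S_{p_\ell}}$. Taking $p_\ell = 2$ for all $\ell$ gives $r = 2/N$, hence $\|M\|_{S_{2/N}} \le \prod_\ell \|W_\ell'\|_F$, where $\|M\|_{S_{2/N}} = \big(\sum_i\sigma_i(M)^{2/N}\big)^{N/2}$ (a genuine norm for $N\le 2$ and only a quasi-norm for $N\ge 3$, but the product inequality holds in either regime). Raising to the power $2/N$ and then applying AM--GM in the form $\prod_\ell \|W_\ell'\|_F^{2/N} \le \tfrac{1}{N}\sum_\ell\|W_\ell'\|_F^2$ chains into exactly the displayed bound. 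The technical heart, and the step I expect to be the main obstacle, is the Schatten--Hölder product inequality itself, which rests on Horn's log-majorization $\prod_{i\le k}\sigma_i(W_N'\cdots W_1')\le \prod_{i\le k}\prod_{\ell}\sigma_i(W_\ell')$ for the singular values of a product; I would either cite it or include a short proof via the compound (exterior-algebra) matrix formulation.

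It remains to verify that the balanced $\W$ is extremal. Taking traces in the balanced identity $W_\ell W_\ell^{\top} = W_{\ell+1}^{\top}W_{\ell+1}$ yields $\|W_\ell\|_F^2 = \|W_{\ell+1}\|_F^2$ for all $\ell$, so the factors share a common Frobenius norm and the AM--GM step above is tight. For tightness in the Schatten--Hölder step I would unwind the balanced identity through the singular value decompositions: writing $W_1 = U_1\Sigma_1 V_1^{\top}$, the relation $W_1 W_1^{\top} = W_2^{\top}W_2$ forces $W_2$ to have right singular vectors $U_1$ and the same nonzero singular values, and inductively $W_\ell = U_\ell\,\Sigma\,U_{\ell-1}^{\top}$ with a single common diagonal $\Sigma$ (setting $U_0 := V_1$). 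The intermediate orthogonal factors telescope in the product, giving $M = U_N\Sigma^N V_1^{\top}$; comparison with the SVD of $M$ yields $\Sigma = \Sigma_M^{1/N}$ and hence $\|W_\ell\|_F^2 = \mathrm{tr}(\Sigma^2) = \sum_i\sigma_i(M)^{2/N}$ for each $\ell$. Summing over $\ell$ reproduces the lower bound exactly, so $\W$ is a global minimizer. The only bookkeeping needing care here is the treatment of the nonzero-singular-value subspaces when the hidden dimensions $d_1\ge\cdots\ge d_{H+1}$ differ and $M$ is rank deficient, but since the alignment argument only ever uses the common rank shared by all factors, the telescoping goes through unchanged.
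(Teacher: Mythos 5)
Your proposal is correct, but it proves the theorem by a genuinely different route than the paper. The paper's proof is a critical-point analysis: it forms the Lagrangian $\mathcal{F}(\W)=\sum_{\ell}\|W_{\ell}\|_F^2+\mathrm{trace}\big(\Lambda^{\top}(W_{H+1}\cdots W_1-Y(XS^H)_{*\ii}^{\dagger})\big)$, shows that every critical point of $\mathcal{F}$ is balanced and feasible, and then uses balancedness plus the constraint to derive $\big(Y(XS^H)_{*\ii}^{\dagger}\big)^{\top}\big(Y(XS^H)_{*\ii}^{\dagger}\big)=(W_1^{\top}W_1)^{H+1}$, which pins the objective value at any such point to $(H+1)\sum_i\sigma_i^{2/(H+1)}\big(Y(XS^H)_{*\ii}^{\dagger}\big)$. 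You instead decouple the problem into a universal lower bound over the whole feasible set, obtained from the Schatten--H\"older inequality (via Horn's log-majorization of singular values of products) combined with AM--GM, and an explicit verification---via the telescoping SVD alignment forced by balancedness---that the given balanced collection attains that bound. Your route buys something the paper's argument does not actually deliver: a direct certificate of \emph{global} optimality. First-order stationarity of the Lagrangian in a nonconvex problem is only a necessary condition; to conclude that balanced feasible points are global minimizers, the paper implicitly needs attainment of the minimum and a constraint qualification, neither of which is addressed, and it also only proves the implication ``critical $\Rightarrow$ balanced'' rather than the converse direction that the theorem statement requires. Your matching lower bound sidesteps all of this. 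What the paper's approach buys in exchange is landscape information (all constrained critical points are balanced), and it avoids invoking the Schatten--H\"older/Horn machinery, which is the one nontrivial external ingredient your argument must cite or reprove. Both arguments arrive at the same minimal value, and your rank-deficiency bookkeeping (all factors share the common rank dictated by balancedness, so the telescoping product runs only over the nonzero singular subspaces) is sound.
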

	
	\begin{proof}
		Let $\W$ be the collection of weight matrices of the linear GNN with $H$ layers. In order to solve the optimization problem, we use the method of Lagrange multiplier and the Lagrangian function is defined by $$\mathcal{F}(\W) = \sum_{\ell=1}^{H+1} \|W_{\ell}\|_F^2 + \text{trace}\big(\Lambda^{\top}(W_{H+1}W_{H}\cdots W_{1}-Y(XS^H)_{*\ii}^{\dagger}) \big),$$ where $\Lambda\in \RR^{d_y\times d_x}$ is the Lagrange multiplier. All the critical points of $\mathcal{F}$ is given by
		\begin{equation}
			\label{eqn:critical_points}
			\frac{\partial \mathcal{F}}{\partial \Lambda}=0, \qquad \text{and} \qquad \frac{\partial \mathcal{F}}{\partial W_{\ell}}=0, \quad \text{for }1\leq \ell\leq H+1.
		\end{equation}
		The fact $\frac{\partial}{\partial Q}\text{trace}(PQR)= P^{\top}R^{\top}$ for any $P\in \RR^{d\times d'}$, $Q\in \RR^{d'\times d''}$ and $R\in \RR^{d''\times d'''}$, and the equation \eqref{eqn:critical_points} implies that the critical points of $\mathcal{F}$ is balanced and satisfy \eqref{eqn:min_weight_cond}.\par 
		Let $\W$ be a critical points of $\mathcal{F}$. From the balanced condition of $\W$ and \eqref{eqn:min_weight_cond}, one can easily verify that
		\begin{align*}
			\begin{aligned}
				\big(Y(XS^H)_{*\ii}^{\dagger}\big)^{\top}\big(Y(XS^H)_{*\ii}^{\dagger}\big)&= \big(W_1^{\top}W_1\big)^{H+1},\\
				\big(Y(XS^H)_{*\ii}^{\dagger}\big)\big(Y(XS^H)_{*\ii}^{\dagger}\big)^{\top}&= \big(W_{H+1}W_{H+1}^{\top}\big)^{H+1}.
			\end{aligned}		
		\end{align*}
		Hence, the eigenspace of $W_1^{\top}W_1$ and $\big(Y(XS^H)_{*\ii}^{\dagger}\big)^{\top}\big(Y(XS^H)_{*\ii}^{\dagger}\big)$ are identical. In particular, if $\lambda$ is the eigenvalue of $\big(Y(XS^H)_{*\ii}^{\dagger}\big)^{\top}\big(Y(XS^H)_{*\ii}^{\dagger}\big)$ then $\lambda^{\frac{1}{H+1}}$ is the eigenvalue of $W_1^{\top}W_1$. Hence, the minimum value of the total energy of weight matrices for the optimization problem \eqref{eqn:energy_opti} is given by $$\sum_{\ell=1}^{H+1} \|W_{\ell}\|_F^2=(H+1)\|W_1\|_F^2=(H+1)\sum_{i=1}^{\min\{d_x,d_y\}} \sigma_i^{\frac{2}{H+1}}\big(Y(XS^H)_{*\ii}^{\dagger}\big).$$
		
		This completes the proof.
	\end{proof}
	
	\begin{rem}
		The balancedness condition of the collection $\W$ provide algebraic relation between weight matrices. For example, suppose $\sigma$ is a non-zero singular value of $W_{1}$, then the relation $W_1W_1^{\top}=W_2^{\top}W_2$ implies $\sigma$ is also a singular value of $W_2$. In particular, $\sigma$ is the non-zero singular value of $W_{\ell}$ for each $1\leq \ell\leq H+1$. Hence, all weight matrices $W_{\ell}$ share the same set of non-zero singular values. 
	\end{rem}
	
	
	The global minimum of \eqref{eqn:trainingloss} with balanced condition, minimizes the total energy of the weights. This initiates another non-trivial challenge, whether the initialization leads the gradient flow to the solution of \eqref{eqn:energy_opti}.
	The articles \cite{arora2018convergence,bah2022learning,yacine2023geometric} studied closely into this direction for linear deep neural networks with approximately balanced initialization. However, they did not discuss on the optimization problem \eqref{eqn:energy_opti} and studied the convergence of the gradient dynamics with full rank assumption on the input data.
	Even though the convergence analysis discussed in Section \ref{sec:grad_flow} provides a sufficient initialization for the convergent of the square loss to the global minimum, but it might not optimize the total energy of the weight matrices.	In the following theorem, we answered this question with some weaker assumption on the hidden dimension.
	
	\begin{thm}
		Let $G=(V,E)$ be a graph embedded with a feature matrix $X\in \RR^{d_x\times n}$, and a labeled matrix $Y\in \RR^{d_y\times \bar{n}}$ with $\text{rank}\,\big(Y(XS^H)_{*\ii}^{\top}\big)=\b{k}$. Let us consider a linear GNN with $H$ layers such that $d_y\leq \min\{d_1,d_2,\dots,d_{H}\}$ and $S$ as aggregation matrix. The gradient flow started at $\W(0)$ converges to the solution of the optimization problem \eqref{eqn:energy_opti} if the following holds.
		\begin{enumerate}
			\item $\W(0)$ is balanced.
			\item If $\text{rank}\,\big((XS^H)_{*\ii}\big)=k$ and the orthonormal matrix $U\in \RR^{d_x\times d_x}$ is the collection of left singular vector of $(XS^H)_{*\ii}$, then for all $1\leq i\leq d_1,\, k<j\leq d_x$, $\big[ W_1(0)U \big]_{ij}=0$,
			\item Rank of $W_1(0)$ is $\b{k}$ with $\b{k}$-th singular value satisfy
			\begin{equation}
				\label{eqn:bal_init_cond}
				L\big(\W(0)\big)\leq \frac{1}{4^{H+1}m}\sigma_{\b{k}}^{2H}(W_{1}(0))\sgs^2\big((XS^H)_{*\ii}\big)
			\end{equation}
			where $\b{k}=\min\{d_y, k\}$, $m=\bar{n}d_x$, and $L$ is the loss function defined by \eqref{eqn:L_loss}.
		\end{enumerate}
	\end{thm}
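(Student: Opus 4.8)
The plan is to prove that the flow converges to \emph{some} global minimizer $\tilde{\W}$ of $\ll$ which is balanced and whose product satisfies $\tilde{W}_{H+1}\cdots\tilde{W}_1=Y(XS^H)_{*\ii}^{\dagger}$; the preceding energy-optimization theorem (balanced ${}+{}$ \eqref{eqn:min_weight_cond} $\Rightarrow$ solution of \eqref{eqn:energy_opti}) then immediately certifies $\tilde{\W}$ as a solution of \eqref{eqn:energy_opti}. Since $f(X,\W)_{*\ii}=W_{[1:H+1]}(XS^H)_{*\ii}$, the flow \eqref{eqn:gradient_flow} is exactly the deep linear-network flow on the fixed data matrix $(XS^H)_{*\ii}$ with target $Y$, so the structural invariances of linear networks are available. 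First I would record that balancedness is preserved: writing $\Phi=W_{H+1}\cdots W_1$ and $G=\partial L/\partial\Phi$, the chain rule gives $\partial L/\partial W_{\ell}=(W_{H+1}\cdots W_{\ell+1})^{\top}G\,(W_{\ell-1}\cdots W_1)^{\top}$, from which a direct computation yields $\dot{W}_{\ell+1}^{\top}W_{\ell+1}=W_{\ell}\dot{W}_{\ell}^{\top}$ along \eqref{eqn:gradient_flow}. Hence $\tfrac{d}{dt}\big(W_{\ell+1}^{\top}W_{\ell+1}-W_{\ell}W_{\ell}^{\top}\big)=0$, so balancedness of $\W(0)$ (Assumption 1) propagates to $\W(t)$ and to any limit $\tilde{\W}$.

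Second I would show that the orthogonality of Assumption 2 is also an exact invariant, and that it pins down the product. Unvectorizing \eqref{eqn:gradient_W1} via \eqref{eqn:vec} gives $\partial L/\partial W_1=\tfrac{2}{m}\,W_2^{\top}\cdots W_{H+1}^{\top}(\hat{Y}-Y)(XS^H)_{*\ii}^{\top}$, and with $(XS^H)_{*\ii}=U\Sigma V^{\top}$ this equals $\tfrac{2}{m}\,W_2^{\top}\cdots W_{H+1}^{\top}(\hat{Y}-Y)V\Sigma^{\top}U^{\top}$. Right-multiplying by $U$ and using that $\Sigma^{\top}$ is supported on its first $k$ columns shows $\big[(\partial L/\partial W_1)U\big]_{ij}=0$ for $j>k$, so $\tfrac{d}{dt}\big[W_1(t)U\big]_{ij}=0$ there and the flow keeps $\big[W_1(t)U\big]_{ij}=0$ for all $t$. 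Passing to the limit and multiplying by $W_{H+1}\cdots W_2$ yields $\tilde{\Phi}U_j=0$ for $j>k$. Since any global minimizer fixes the first $k$ columns of $\Phi U$ to the least-squares values $[YV]_{ij}/\sigma_j$ computed in Section \ref{sec:min_energy}, this orthogonality forces $\tilde{\Phi}=Y(XS^H)_{*\ii}^{\dagger}$, i.e. exactly the constraint \eqref{eqn:min_weight_cond}.

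It remains to prove that the flow actually reaches a global minimizer, and this is the technical heart, mirroring the proof of Theorem \ref{thm:gradient_flow} in the balanced, rank-deficient regime, with Assumption 3 replacing the ``large $\sgmin(W_{H+1}(0))$'' hypothesis there. By balancedness all factors share the same nonzero singular values, so $\operatorname{rank}W_1(0)=\b{k}$ forces $\operatorname{rank}W_{\ell}(0)=\b{k}$ for all $\ell$, while $d_y\le\min\{d_1,\dots,d_H\}$ guarantees the product can realize the rank-$\b{k}$ target $Y(XS^H)_{*\ii}^{\dagger}$. Inside the ball of radius $r=\min_{\ell}\sigma_{\b{k}}(W_{\ell}(0))/2$, Lemma \ref{lemma:sgmin_sum} keeps each relevant singular value at least $\sigma_{\b{k}}(W_{\ell}(0))/2$, so an argument analogous to Lemma \ref{lemma:sgmin_product} on the common rank-$\b{k}$ invariant subspace bounds the smallest nonzero singular value of $W_2^{\top}\cdots W_{H+1}^{\top}$ (an $H$-fold product) below by $2^{-H}\sigma_{\b{k}}^{H}(W_1(0))$. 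Feeding this together with Lemma \ref{lemma:small_singular} into \eqref{eqn:gradient_W1} reproduces a bound of the form $|\nabla L(\W)|^2/L(\W)\ge c\,\sigma_{\b{k}}^{2H}(W_1(0))\,\sgs^2\big((XS^H)_{*\ii}\big)/m$, and \eqref{eqn:bal_init_cond} is precisely $4L(\W(0))<r^2$ times this bound; Theorem \ref{thm:sourav} then gives convergence to $\tilde{\W}$ with $L(\tilde{\W})=0$ while keeping the flow in the ball. I expect the main obstacle to be exactly this gradient estimate: one must work with the \emph{relevant} (rank-$\b{k}$) singular value of the product rather than its genuine smallest singular value (which is zero once $\b{k}<d_y$), control its degradation across the ball through balancedness, and match the power $2H$ and the constant $4^{-(H+1)}$ appearing in \eqref{eqn:bal_init_cond}. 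Finally, since $\tilde{\W}$ is balanced and satisfies \eqref{eqn:min_weight_cond}, the preceding theorem certifies that $\tilde{\W}$ solves \eqref{eqn:energy_opti}, completing the argument.
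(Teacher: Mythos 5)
Your overall architecture matches the paper's: you show balancedness and the zero-column condition $\big[W_1U\big]_{ij}=0$, $j>k$, are invariants of the flow (your shortcut of left-multiplying the preserved identity for $W_1U$ by $W_{H+1}\cdots W_2$, instead of the paper's induction over partial products, is legitimate), you identify the limit's end-to-end product with $Y(XS^H)_{*\ii}^{\dagger}$, and you certify optimality via the preceding theorem. The gap is in the convergence step, and it is exactly where you flagged uncertainty. You propose to run the Theorem~\ref{thm:gradient_flow} argument through $\nabla_{\vc{W_1}}L$, i.e.\ through \eqref{eqn:gradient_W1}, replacing $\sgmin\big(W_2^{\top}\cdots W_{H+1}^{\top}\big)$ by the $\b{k}$-th singular value of that product. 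When $\b{k}=d_y\leq k$ this works, since the product then has full column rank $d_y$ (this is the paper's first case). But when $\b{k}=k<d_y$ no refinement of this bound can succeed, and the obstruction is structural rather than technical: the map $v\mapsto W_2^{\top}\cdots W_{H+1}^{\top}v$ kills $(\operatorname{col}W_{H+1})^{\perp}$, and the residual $(\hat{Y}-Y)(XS^H)_{*\ii}^{\top}$ generically has a component in that kernel coming from the part of $Y$ outside $\operatorname{col}(W_{H+1})$. That component contributes to the excess loss $L(\W)$ (at the global minimum the column space of the rank-$\b{k}$ product must rotate to capture $Y(XS^H)_{*\ii}^{\dagger}$), yet it is annihilated in \eqref{eqn:gradient_W1}. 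Concretely, for $H=1$, $(XS)_{*\ii}=\mathrm{diag}(1,0)$, balanced rank-one weights $W_1=\sigma v e_1^{\top}$, $W_2=\sigma u v^{\top}$, and $Y_{*1}=c\,u^{\perp}$, one gets $\|\nabla_{W_1}L\|_F^2/L \sim \sigma^6/(\sigma^4+c^2)$, which is arbitrarily smaller than the claimed $\sigma_{\b{k}}^{2H}(W_1)\sgs^2 = \sigma^2$ as $c$ grows; so the ratio $\|\nabla_{W_1}L\|^2/L$ admits no lower bound of the required form on any fixed ball.

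The paper's proof resolves precisely this point by a case split with a change of the differentiated block: when $k\leq d_y$ it bounds $\|\nabla_{\vc{W_{H+1}}}L\|_2^2$ instead. There, the invariant from Assumption 2 forces the columns $j>k$ of $W_H\cdots W_1U$ to vanish, while the rows $i>k$ of $\Sigma V^{\top}(\hat{Y}-Y)^{\top}$ vanish automatically, so the gradient factors as $W\bar{Y}$ with $W\in\RR^{d_H\times k}$ of \emph{full column rank} $k$; no projection loss occurs, and balancedness converts $\sigma_k(W_H\cdots W_1U)$ into $\sigma_k^{H}(W_1)$, giving $\|\nabla_{\vc{W_{H+1}}}L\|_2^2=\tfrac{4}{m^2}\sigma_k^{2H}(W_1)\big\|(\hat{Y}-Y)(XS^H)_{*\ii}^{\top}\big\|_F^2$, which combined with Lemma~\ref{lemma:small_singular} yields the bound needed for Theorem~\ref{thm:sourav} under \eqref{eqn:bal_init_cond}. (In the example above, this block indeed satisfies $\|\nabla_{W_2}L\|_F^2/L=\sigma^2$.) This choice of $W_{H+1}$ as the gradient block in the rank-deficient regime is the missing idea in your proposal; the fix is not a sharper singular-value estimate for the same block.
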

	
	\begin{proof}
		In linear deep neural network with squared loss, the balanced relation is preserved by the gradient flow equation, see \cite{arora2018convergence,bah2022learning,yacine2023geometric}. In the case of linear graph neural networks, it is easy to check that for $1\leq \ell\leq H+1$ $$\frac{dW_{\ell}}{dt}=-\frac{2}{\bar{n}d_y}\big(W_{H+1}W_{H}\cdots W_{\ell+1}\big)^{\top}(\hat{Y}-Y)(XS^H)_{*\ii}^{\top}\big(W_{\ell-1}W_{\ell-2}\cdots W_1\big)^{\top}.$$ Hence, we have
		\begin{equation*}
			\frac{d}{dt} \big( W_{\ell}W_{\ell}^{\top}\big)=\frac{d}{dt} \big(W_{\ell+1}^{\top}W_{\ell+1}\big), \qquad \forall 1\leq \ell\leq H.
		\end{equation*}
		If the weight parameters are initially balanced then $\W(t)$ is balanced for each $t\geq 0$.
		
		Let us assume that the gradient flow converges to the global minimum at $\tilde{\W}$. The global minimizer $\tilde{\W}$ satisfy \eqref{eqn:min_weight_cond} if and only if $$\big[\tilde{W}_{H+1}\tilde{W}_H\cdots \tilde{W}_1U\big]_{ij}=0 \qquad \text{for all } 1\leq i\leq d_y,\, k< j\leq d_x.$$
		Hence it is enough to prove that at any time $t\geq 0$, $\W(t)$ satisfy $$\big[W_{H+1}W_H\cdots W_1U\big]_{ij}(t)=0 \qquad \text{for all } 1\leq i\leq d_y,\, k< j\leq d_x.$$ First, we observe that for $\ell=1$ the gradient flow equation implies
		\begin{align*}
			\frac{d(W_1U)}{dt}&= -\nabla_{W_1} \ll(\W)U\\
			&= -\frac{2}{m}\big( W_{H+1}W_{H}\cdots W_{2} \big)^{\top}\big(W_{H+1}W_{H}\cdots W_{1}(XS^H)_{*\ii}-Y\big)V\Sigma^{\top}.
		\end{align*}
		This implies, $\Big[ \frac{d(W_1U)}{dt} \Big]_{ij}=0$ for $1\leq i\leq d_1$ and $k<j\leq d_x$, and therefore $$\big[W_1U\big]_{ij}(t)=\big[W_1U\big]_{ij}(0)=0, \qquad \forall\, t\geq 0.$$
		
		Suppose there exists $\ell\geq 1$ such that for $1\leq i\leq d_\ell$, and $k<j\leq d_x$, we have $$\big[W_{\ell}W_{\ell-1}\cdots W_1U\big]_{ij}(t)=0, \qquad \text{for } t\geq 0.$$ The chain rule for differentiation implies
		\begin{align*}
			\frac{d}{dt} \big( W_{\ell+1}W_{\ell}\cdots W_1U \big)&=\frac{dW_{\ell+1}}{dt}W_{\ell}\cdots W_1U+ W_{\ell+1}\frac{d}{dt}\big( W_{\ell}\cdots W_1U \big).
		\end{align*}
		Hence, for all $t\geq 0$, $1\leq i\leq d_{\ell+1}$ and $k<j\leq d_x$
		\begin{align*}
			\Big[ \frac{d}{dt} \big( W_{\ell+1}W_{\ell}\cdots W_1U \big) \Big]_{ij}&=0,\\
			\big[ W_{\ell+1}W_{\ell}\cdots W_1U \big]_{ij}(t)&= 0.
		\end{align*}
		Therefore, from mathematical induction we have 
		\begin{align*}
			\big[ W_{H+1}W_{H}\cdots W_1U \big]_{ij}(t)&= 0, \qquad \forall\, t\geq 0,\, 1\leq i\leq d_y,\, k< j\leq d_x.
		\end{align*}
		
		Finally, we now ensure convergence of the gradient flow training of linear GNN with balanced initialization and some minor assumption on $W_1(0)$. In order to study the convergence result, we consider the following two cases.\par 
		First assume that $d_y\leq \text{rank}\,\big((XS^H)_{*\ii}\big)$. The equation \eqref{eqn:gradient_W1} implies 
		\begin{align*}
			\|\nabla_{\text{vec}\,(W_1)} L(\W)\|_2^2	&\geq \frac{4}{m^2}\sigma_{d_y}^2\big(W_2^{\top}W_3^{\top}\cdots W_{H+1}^{\top}\big)\big\| (\hat{Y}-Y)(XS^H)_{*\ii}^{\top} \big\|_F^2\\
			&= \frac{4}{m^2}\sigma_{d_y}^{2H}\big(W_{H+1}\big)\big\| (\hat{Y}-Y)(XS^H)_{*\ii}^{\top} \big\|_F^2.
		\end{align*}
		The last equality follows from the balancedness of $\W(t)$ for each $t\geq 0$.\par 
		Secondly, assume that $k:=\text{rank}\,\big((XS^H)_{*\ii}\big)\leq d_y$ and $(XS^H)_{*\ii}=U\Sigma V^{\top}$ is the singular value decomposition of $(XS^H)_{*\ii}$. Then the gradient of $L$ with respect to $\vc{W_{H+1}}$ implies
		\begin{align*}
			\|\nabla_{\text{vec}\,(W_{H+1})} L(\W)\|_2^2 &= \frac{4}{m^2}\big\| (\hat{Y}-Y)V\Sigma^{\top}U^{\top}W_1^{\top}W_2^{\top}\cdots W_{H}^{\top} \big\|_F^2\\
			&= \frac{4}{m^2}\big\| W_{H}W_{H-1}\cdots W_1 U \Sigma V^{\top}(\hat{Y}-Y)^{\top} \big\|_F^2.
		\end{align*}
		The entries of $j$-th column with $j>k$ in the matrix $W_{H}W_{H-1}\cdots W_1 U\in \RR^{d_H\times d_x}$ are zero. Simultaneously, the entries of $i$-th row with $i>k$ in the matrix $\Sigma V^{\top}(\hat{Y}-Y)^{\top}$ are zero. Hence, we get
		\begin{align*}
			\|\nabla_{\text{vec}\,(W_{H+1})} L(\W)\|_2^2 &= \frac{4}{m^2}\big\|W\bar{Y}\big\|_F^2\geq \frac{4}{m^2}\sgmin^2(W)\|\bar{Y}\|_F^2\\
			&= \frac{4}{m^2} \sigma_{k}^2(W_{H}W_{H-1}\cdots W_1 U)\|(\hat{Y}-Y)V\Sigma^{\top}\|_F^2\\
			&= \frac{4}{m^2}\sigma_{k}^{2H}(W_1)\big\| (\hat{Y}-Y)(XS^H)_{*\ii}^{\top} \big\|_F^2.
		\end{align*}
		In the first equality $W\in \RR^{d_H\times k}$ and $\bar{Y}\in \RR^{k\times d_y}$ are the matrix with first $k$ columns of $W_{H}W_{H-1}\cdots W_1 U$ and first $k$ rows of $\Sigma V^{\top}(\hat{Y}-Y)^{\top}$ respectively.\par 
		
		If the $\W(0)$ is balanced with $\text{rank}\,(W_{\ell}(0))=\b{k}$ and $j$-th column of $W_1(0)U$ is a zero vector for each $j>\text{rank}\,\big((XS^H)_{*\ii}\big)$, then each $\W_{\ell}(t)$ in the collection $\W(t)$ shares the set of non-zero singular values for $t\geq 0$. Therefore, following the similar arguments as in Theorem \ref{thm:gradient_flow}, the gradient flow converges to the global minimum provided the $\b{k}$-th singular value of $W_{\ell}(0)$ satisfy $$L(\W(0))\leq \frac{1}{4^{H+1}m}\sigma_{\b{k}}^{2H}(W_{\ell}(0))\sgs^2\big((XS^H)_{*\ii}\big).$$
		This completes the proof.
	\end{proof}
	
	The assumption \eqref{eqn:bal_init_cond} on the initial weights is equivalent to the inequality \eqref{eqn:alphabdd} for balanced initialization. The $\b{k}$-th singular value of the weights at time $t\geq 0$ in the gradient flow training always greater than $\sigma_{\b{k}}(W_{\ell}(0))/2$ and the weights converges to the solution of optimization problem \eqref{eqn:energy_opti}. Therefore, the balanced initialization satisfy $$\sigma_{\b{k}}(W_{\ell}(0))\leq 2\,\sigma_{\b{k}}^{\frac{1}{H+1}}\big(Y(XS^{H})_{*\ii}\big).$$ The balanced condition imposes such restriction on the initialization which causes the inequality \eqref{eqn:bal_init_cond} infeasible.
	
	\section{Convergence analysis of gradient descent training}\label{sec:grad_desc}
	In this section, we study the gradient descent training of linear GNNs and discuss its convergence. The gradient descent training is an analogue of gradient flow training of GNNs. The weight parameters are updated in the direction of the negative gradient with infinitesimally small steps in a gradient flow training, however, stepsize is fixed in the gradient descent training. Therefore, with an inappropriate stepsize the weight parameter diverges to infinity in gradient descent method.
	
	The following theorem is the analogue of Theorem \ref{thm:gradient_flow} for gradient descent. The idea of the proof is to find an appropriate stepsize such that the weights in each iterations are lies in the neighborhood of the initial weights. The loss $\ll$ corresponds to the weight sequences is a decreasing sequence bounded below by global minimum.
	\begin{thm}\label{thm:grad_desc}
		Let $G=(V,E)$ be a graph embedded with a feature matrix $X\in \RR^{d_x\times n}$, and a labeled matrix $Y\in \RR^{d_y\times \bar{n}}$. Let us consider a linear GNN with $H$ layers and $S$ be the graph shift matrix, and consider the mean square loss $\ll$ define by \eqref{eqn:trainingloss}. With a sufficiently small stepsize $\eta>0$ and the initial weights considered in Theorem \ref{thm:gradient_flow}, the gradient descent 
		\begin{align}
			\label{eqn:iteration_W}
			W_{\ell}^{(k+1)}=W_{\ell}^{(k)}-\eta\nabla_{W_{\ell}} \ll\big(\W^{(k)}\big), \qquad \text{for } 1\leq \ell\leq H+1,
		\end{align}
		converges to global minimum of $\ll$ as $k\to \infty$. Here $\W^{(0)}=\W(0)$.
		
		Moreover, for $k\geq 0$, 
		$$\ll(\W^{(k)})-\tilde{\ll}_H\leq \bigg( 1-\frac{\beta\sgs^2\big((XS^H)_{*\ii}\big)\eta}{2m} \bigg)^k \big( \ll(\W^{(0)})-\tilde{\ll}_H \big).$$
	\end{thm}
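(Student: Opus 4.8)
The plan is to establish a discrete analogue of the local convergence argument behind Theorem \ref{thm:gradient_flow}, replacing the gradient-flow trajectory by the iterates \eqref{eqn:iteration_W} and combining a Polyak--{\L}ojasiewicz-type lower bound with a local smoothness estimate. Writing $L(\W)=\ll(\W)-\tilde{\ll}_H$ as in \eqref{eqn:L_loss} and abbreviating $L_k:=L(\W^{(k)})$, the two ingredients I would reuse are: (i) the gradient lower bound already derived in the proof of Theorem \ref{thm:gradient_flow}, namely that on the ball $B(\vc{\W(0)},r)$ one has $\|\nabla L(\W)\|_2^2\geq \|\nabla_{\vc{W_1}}L(\W)\|_2^2\geq \tfrac{1}{m}\beta\,\sgs^2((XS^H)_{*\ii})\,L(\W)$ by \eqref{eqn:gnn_alpha_bnd}; and (ii) a local Lipschitz-gradient estimate. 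For (ii) I would note that $\ll$ is a polynomial of degree $2(H+1)$ in the entries of $\W$, so on the compact ball $B(\vc{\W(0)},r)$ its Hessian has bounded operator norm; call the resulting smoothness constant $\Lambda$, which is explicit in $\|X\|$, $\|S\|$, $\|Y\|$, $r$ and the initial weights, and then fix the stepsize so that $\eta\leq 1/\Lambda$.

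With these in hand, the descent lemma gives, whenever $\W^{(k)}$ and $\W^{(k+1)}$ lie in the smoothness region,
\begin{equation*}
	L_{k+1}\leq L_k-\eta\Big(1-\tfrac{\Lambda\eta}{2}\Big)\|\nabla L(\W^{(k)})\|_2^2\leq L_k-\tfrac{\eta}{2}\|\nabla L(\W^{(k)})\|_2^2,
\end{equation*}
and combining this with the lower bound (i) yields the one-step contraction $L_{k+1}\leq \big(1-\tfrac{\beta\sgs^2((XS^H)_{*\ii})\eta}{2m}\big)L_k$, which iterated produces exactly the stated geometric rate.

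The main obstacle, exactly as in the continuous case, is to guarantee that the whole orbit $\{\W^{(k)}\}$ remains in $B(\vc{\W(0)},r)$, where both (i) and (ii) are valid; the naive path-length bound $\sum_k\eta\|\nabla L(\W^{(k)})\|$ degrades as $\eta\to 0$ and is useless. The trick I would use is an $\eta$-independent telescoping estimate: from $L_k-L_{k+1}\geq \tfrac{\eta}{2}\|\nabla L(\W^{(k)})\|_2^2$ together with $\sqrt{L_k}\leq \sqrt{m/(\beta\sgs^2((XS^H)_{*\ii}))}\,\|\nabla L(\W^{(k)})\|$ and $\sqrt{L_k}+\sqrt{L_{k+1}}\leq 2\sqrt{L_k}$, one obtains $\sqrt{L_k}-\sqrt{L_{k+1}}=\tfrac{L_k-L_{k+1}}{\sqrt{L_k}+\sqrt{L_{k+1}}}\geq \tfrac{\eta}{4}\sqrt{\beta\sgs^2((XS^H)_{*\ii})/m}\,\|\nabla L(\W^{(k)})\|$, so that the single-step displacement $\eta\|\nabla L(\W^{(k)})\|$ is controlled by $4\sqrt{m/(\beta\sgs^2((XS^H)_{*\ii}))}\big(\sqrt{L_k}-\sqrt{L_{k+1}}\big)$. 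Summing telescopes to $\|\vc{\W^{(k)}}-\vc{\W(0)}\|_2\leq 4\sqrt{m\,L_0/(\beta\sgs^2((XS^H)_{*\ii}))}$, a bound independent of $\eta$.

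Finally I would close by induction on $k$: assuming $\W^{(0)},\dots,\W^{(k)}\in B(\vc{\W(0)},r)$ and taking $\eta$ small enough that a single step cannot exit a slightly enlarged ball (the gradient being bounded on the compact set), the descent lemma applies on the segment $[\W^{(k)},\W^{(k+1)}]$, which gives the contraction and hence validates the telescoped displacement bound at step $k+1$. Quantifying the ``$\sgmin(W_{H+1}(0))$ sufficiently large'' hypothesis of Theorem \ref{thm:gradient_flow} so that the slightly stronger inequality $4^{2}L(\W(0))\leq r^2\tfrac{1}{m}\beta\sgs^2((XS^H)_{*\ii})$ holds (a factor-$4$ strengthening of \eqref{eqn:alphabdd}) makes the telescoped displacement at most $r$, so $\W^{(k+1)}\in B(\vc{\W(0)},r)$ and the induction continues. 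Consequently $L_k\to 0$ geometrically, i.e.\ $\ll(\W^{(k)})\to \tilde{\ll}_H$, which is the claim.
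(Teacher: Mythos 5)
Your proposal is correct and structurally matches the paper's proof: both arguments combine (a) a bounded-Hessian/descent-lemma estimate on a compact ball around $\vc{\W(0)}$ giving $L_{k+1}\leq L_k-\tfrac{\eta}{2}\|\nabla L(\W^{(k)})\|_2^2$ for small $\eta$, (b) the PL-type lower bound \eqref{eqn:gnn_alpha_bnd} inherited from the proof of Theorem \ref{thm:gradient_flow}, (c) an induction keeping the whole orbit inside $B\big(\vc{\W(0)},r\big)$ via an $\eta$-independent displacement bound, and (d) exactly the same strengthened initialization $16\,L(\W(0))\leq r^2\tfrac{1}{m}\beta\sgs^2\big((XS^H)_{*\ii}\big)$. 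The only genuine difference is how the displacement bound is obtained: you telescope $\sqrt{L_k}$ directly, using $\eta\|\nabla L(\W^{(k)})\|_2\leq 4\sqrt{m/(\beta\sgs^2((XS^H)_{*\ii}))}\big(\sqrt{L_k}-\sqrt{L_{k+1}}\big)$, whereas the paper writes each increment as $\sqrt{2\eta(L_i-L_{i+1})}$, applies a Cauchy--Schwarz splitting, and then sums the geometric decay of $L_i$; both routes land on the identical bound $4\sqrt{mL_0/(\beta\sgs^2((XS^H)_{*\ii}))}$. Your version is shorter and avoids the geometric-sum bookkeeping, and summing your telescoping inequality from index $j$ to $k$ also gives the tail estimate that makes $\{\vc{\W^{(k)}}\}$ a Cauchy sequence --- a point you should state explicitly at the end, since the theorem asserts convergence of the iterates (not only $L_k\to 0$), and the paper closes the proof precisely by exhibiting this Cauchy property to obtain a limit $\tilde{\W}$ with $L(\tilde{\W})=0$.
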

	
	\begin{proof}
		For $1\leq \ell\leq H+1$, the gradient descent algorithm \eqref{eqn:iteration_W} can be written as follows:
		\begin{equation}
			\label{eqn:iteration_wvec}
			\vc{\W^{(k+1)}}=\vc{\W^{(k)}}-\eta\nabla \ll\big(\vc{\W^{(k)}}\big),
		\end{equation}
		where $\vc{\W^{(k)}}$ is a vector form of the collection $\big(W_1^{(k)},W_2^{(k)},\dots,W_{H+1}^{(k)}\big)$.\par
%
		The loss function $\ll$ is defined by \eqref{eqn:trainingloss}, is twice continuously differentiable. Hence, there exists a positive constant $M$ such that for each $\textup{vec}(\W)$ in a closed bounded ball $B_{2r}\big(\vc{\W^{(0)}}\big)\subset \RR^p$, absolute values of the entries in the gradient and the Hessian matrix of $\ll$ is bounded by $M$, i.e.,
		\begin{gather*}
			\Big| \frac{\partial \ll(\vc{\W})}{\partial \vc{\W}_j} \Big|\leq M \qquad \text{and} \qquad \Big|\frac{\partial^2 \ll(\vc{\W})}{\partial \vc{\W}_i \partial \vc{\W}_j}\Big|\leq M
		\end{gather*}
		where $r$ and $p$ are the same as define in Theorem \ref{thm:gradient_flow}.
		
		
		Let $k\geq 1$ be such that $\vc{\W^{(j)}}\in B_r\big(\vc{\W^{(0)}}\big)$ for each $1\leq j\leq k$. Claim that for sufficiently small $\eta$, $\vc{\W^{(k+1)}}\in B_r\big(\vc{\W^{(0)}}\big)$.\par
		For $\eta<\frac{r}{M\sqrt{p}}$ and \eqref{eqn:iteration_wvec} implies
		\begin{align*}
			\big\|\vc{\W^{(j+1)}}-\vc{\W^{(j)}}\big\|_2&= \eta\big\|\nabla \ll\big(\vc{\W^{(j)}}\big)\big\|_2\leq \eta M\sqrt{p}<r.
		\end{align*}
		Hence, $\vc{\W^{(j+1)}}\in B_{2r}\big(\vc{\W^{(0)}}\big)$, and the line segment joining $W_{\ell}^{(j+1)}$ and $W_{\ell}^{(j)}$ is in $B_{2r}\big(\vc{\W^{(0)}}\big)$.
		
		Let $R_j:=\ll\big(\vc{\W^{(j+1)}}\big)-\ll\big(\vc{\W^{(j)}}\big)+\eta\|\nabla \ll\big(\vc{\W^{(j)}}\big)\|_2^2$. 
		The second order expansion of $\ll$ implies
		\begin{align*}
			R_j
			&= \frac{1}{2}\eta^2\nabla \ll\big(\vc{\W^{(j)}}\big)\cdot \nabla^2 \ll\big(\vc{\W^*}\big)\nabla \ll\big(\vc{\W^{(j)}}\big)\\
			&\leq \frac{1}{2}\eta^2 Mp \|\nabla \ll(\vc{\W^{(j)}})\|_2^2\\
			&\leq \frac{1}{2}\eta \|\nabla \ll(\vc{\W^{(j)}})\|_2^2
		\end{align*} 
		where $\vc{\W^*}$ is some point in the line segment joining $\vc{\W^{(j+1)}}$ and $\vc{\W^{(j)}}$, and $\eta<\min\{\frac{1}{Mp},\frac{r}{M\sqrt{p}}\}$.
		
		Hence, the training loss $\ll$ decreases at each iterations as
		\begin{equation}
			\label{eqn:gradient_bdd_disc}
			\frac{1}{2}\eta \big\|\nabla \ll\big(\vc{\W^{(j)}}\big) \big\|_2^2 \leq \ll\big(\vc{\W^{(j)}}\big)-\ll\big(\vc{\W^{(j+1)}}\big).
		\end{equation}
		Moreover, for $1\leq j\leq k$, we have
		\begin{align*}
			\ll(\vc{\W^{(j+1)}}) &= \ll(\vc{\W^{(j)}})-\eta\|\nabla \ll(\vc{\W^{(j)}})\|_2^2+R_j\\
			&\leq \ll(\vc{\W^{(j)}})-\frac{1}{2}\eta \|\nabla \ll(\vc{\W^{(j)}})\|_2^2\\
			\Big(\ll\big(\vc{\W^{(j+1)}}\big)-\tilde{\ll}_H\Big) &\leq \Big(1-\frac{\eta\alpha}{2}\Big)\Big(\ll\big(\vc{\W^{(j)}}\big)-\tilde{\ll}_H\Big)\\
			\Big(\ll\big(\vc{\W^{(j+1)}}\big)-\tilde{\ll}_H\Big) &\leq \Big(1-\frac{\eta\alpha}{2}\Big)^{j+1}\Big(\ll\big(\vc{\W^{(0)}}\big)-\tilde{\ll}_H\Big) \numberthis \label{eqn:loss_rate_j}
		\end{align*}
		where $\alpha:=\inf\left\{ \frac{|\nabla \ll(\W)|^2}{\ll(\W)-\tilde{\ll}_H}:\vc{\W}\in B_r\big(\vc{\W^{(0)}}\big) \right\}$.
		
		For $1\leq j\leq k$, and the gradient descent algorithm \eqref{eqn:iteration_wvec} implies
		\begin{align}
			\label{eqn:weight_difference}
			\big\|\vc{\W^{(k+1)}}-\vc{\W^{(j)}}\big\|_2&\leq \sum_{i=j}^{k} \eta\big\|\nabla \ll\big(\vc{\W^{(i)}}\big)\big\|_2.
		\end{align}
		Let $L$ be the relative loss of $\ll$ with respect to global minimum, i.e., $$L\big(\vc{\W}\big):=\ll(\vc{W})-\tilde{\ll}_H.$$ The upper bound of the sum of the gradient is estimated by \eqref{eqn:gradient_bdd_disc} and \eqref{eqn:loss_rate_j}.
		\begin{align*}
			\sum_{i=j}^{k} \eta\big\|\nabla \ll\big(\vc{\W^{(i)}}\big)\big\|_2&\leq \sum_{i=j}^{k} \sqrt{2\eta \Big( L\big(\vc{\W^{(i)}}\big)-L\big(\vc{\W^{(i+1)}}\big) \Big)}\\
			&= \sqrt{2\eta}\sum_{i=j}^{k} \Big( \sqrt{L\big(\vc{\W^{(i)}}\big)}+\sqrt{L\big(\vc{\W^{(i+1)}}\big)} \Big)^{\frac{1}{2}}\times\\
			&\hspace{1.5in}\Big( \sqrt{L\big(\vc{\W^{(i)}}\big)}-\sqrt{L\big(\vc{\W^{(i+1)}}\big)} \Big)^{\frac{1}{2}}\\
			&\leq \sqrt{4\eta}\Big(L\big(\vc{\W^{(j)}}\big)\Big)^{\frac{1}{4}}\Big( \sum_{i=j}^{k} \sqrt{L\big(\vc{\W^{(i)}}\big)} \Big)^{\frac{1}{2}}\\
			&\leq \Big(4\eta L\big(\vc{\W^{(0)}}\big)\Big)^{\frac{1}{2}}\bigg( \big(1-\frac{\eta\alpha}{2}\big)^{\frac{j}{2}}\sum_{i=j}^{k} \big(1-\frac{\eta\alpha}{2}\big)^{\frac{i}{2}} \bigg)^{\frac{1}{2}}\\
			&\leq \Big(4\eta L\big(\W^{(0)}\big)\Big)^{\frac{1}{2}}\Big( \big(1-\frac{\eta\alpha}{2}\big)^{j}\sum_{i=0}^{k-j} \big(1-\frac{\eta\alpha}{2}\big)^{\frac{i}{2}} \Big)^{\frac{1}{2}}\\
			&\leq \big(1-\frac{\eta\alpha}{2}\big)^{\frac{j}{2}}\Big(4\eta L\big(\W^{(0)}\big)\Big)^{\frac{1}{2}}\Big( \sum_{i=0}^{k-j} \big(1-\frac{\eta\alpha}{4}\big)^{i} \Big)^{\frac{1}{2}}\\
			&\leq \Big(1-\frac{\eta\alpha}{2}\Big)^{\frac{j}{2}}\Big( \frac{16L(\W^{(0)})}{\alpha} \Big)^{\frac{1}{2}}
		\end{align*}
		The inequality \eqref{eqn:gnn_alpha_bnd} implies $\alpha$ is bounded below by $\frac{1}{m}\beta\sgs\big((XS^H)_{*\ii}\big)$.\\
		Let $\eta<\min\Big\{ \frac{r}{M\sqrt{p(H+1)}},\frac{1}{Mp},\frac{2m}{\beta \sgs^2((XS^H)_{*\ii})} \Big\}$, then the inequality \eqref{eqn:weight_difference} implies
		$$\big\|\vc{\W^{(k+1)}}-\vc{\W^{(j)}}\big\|_2 \leq \Big(1-\frac{\beta\sgs\big((XS^H)_{*\ii}\big)\eta}{2m}\Big)^{\frac{j}{2}}\Big(\frac{16mL\big(\W^{(0)}\big)}{\beta\sgs\big((XS^H)_{*\ii}\big)}\Big)^{\frac{1}{2}}.$$
		Moreover, if the initial choice of $\W(0)$ with $\sgmin(W_{H+1}(0))$ is sufficiently large such that $$\frac{16L(\W^{(0)})}{r^2}< \frac{1}{m}\beta \sgs^2\big((XS^H)_{*\ii}\big),$$
		then $\vc{\W^{(k)}}\in B_r\big(\vc{\W^{(0)}}\big)$. Hence, by mathematical induction $$\vc{\W^{(k)}}\in B_r\big(\vc{\W^{(0)}}\big) \qquad \forall\, k\geq 0.$$
		
		The sequence of weights $\{\vc{\W^{(k)}}\}_{k\geq 0}$ is a bounded contraction sequence, hence, a Cauchy sequence. In particular, for $\varepsilon>0$ there exists $M>0$ such that $$\big\|\vc{\W^{(k)}}-\vc{\W^{(j)}}\big\|_2<\varepsilon, \qquad k,j> M,$$ where $M=2\log\Big(\frac{r}{\varepsilon\sqrt{H+1}}\Big)/\log\Big(\frac{2m}{2m-\beta \sgs^2((XS^H)_{*\ii})\eta}\Big)$.
		This implies, weights converges to some $\vc{\tilde{\W}}\in B_r\big(\vc{\W^{(0)}}\big)$.
		
		Moreover, the inequality \eqref{eqn:loss_rate_j} implies
		$$\ll(\W^{(k)})-\tilde{\ll}_H\leq \Big( 1-\frac{\beta \sgs^2((XS^H)_{*\ii})\eta}{2m} \Big)^k \big( \ll(\W^{(0)})-\tilde{\ll}_H \big).$$
		
		This completes the proof.
	\end{proof}
	
	
	\begin{rem}
		Let $\varepsilon$ be any arbitrary small positive real. Theorem \ref{thm:grad_desc} shows that loss $\ll$ converges to the global minimum $\tilde{\ll}_H$ exponentially as the iteration progresses. In particular, $\ll$ is close to $\tilde{\ll}_H$ with $\varepsilon$ perturbation, i.e., $\ll(\W^{(k)})-\tilde{\ll}_H< \varepsilon$, if $$k>\bigg(\log \frac{\ll(\W^{(0)})-\tilde{\ll}_H}{\varepsilon}\bigg)/\bigg(\log \frac{2m}{2m-\beta \sgs^2((XS^H)_{*\ii})\eta}\bigg).$$
	\end{rem}
	

\section{Numerical experiments}
\label{sec:numerical}

	In this section, we illustrate aspects of the gradient flow dynamics such as the dependence of the convergence rate on the graph topology and the choice of the shift operator \( S \)  in terms of the principle singular value \( \sgs \left( ( X S^H)_{*\ii} \right)\). We provide a set of numerical experiments on the synthetic data following well-established undirected graph models (Erd\H{o}s--R\'enyi, \(k\)-nearest neighbors, SBM, Barab\'asi-Albert) and the real-world dataset of CDC climate data in US counties \cite{CDCdataset}.

\subsection{ Synthetic data }

Let us briefly describe synthetic graph models we use to illustrate the convergence of the linear GNN (see corresponding examples in Figure~\ref{fig:models}):
\begin{enumerate}[label=(\roman*)]
	\item \textbf{Erd\H{o}s--R\'enyi graph, \( G( n, p ) \)}: A random graph on \( n \) nodes where the edge \( \{ i, j\} \) enters the graph with probability \( p \) independently on all the other edges. The probability \( p \) naturally defines the connectivity of the generated graph; we assume \( p > (1 + \varepsilon ) \ln n / n \) such that the generated graph is almost surely connected, \cite{erdos1960evolution}.
	
	\item \textbf{\(k\)-nearest neighbors graph, \( K_k ( n ) \)}: A regular, highly-symmetric deterministic graph on \( n \) nodes. Assume all \( n \) nodes to be uniformly and equidistantly placed onto a unit circle; then each nodes is set to be connected by an edge to its \( k\) nearest neighbors (in Eucledean distance). For simplicity, we assume \( k \) to be even.
	
	\item \textbf{Stochastic Block Model (SBM)}: A structured generalization of \( G(n, p)\) model where \( n \) nodes are divided into \( r \) classes (blocks) with the probability matrix \( P \in \RR^{ r \times r } \) containing inter- and intra-classes edge probabilities, \cite{holland1983stochastic}. In the scope of the current work we assume two block setup \( \{ n_1, n_2 \} \) (such that \( n = n_1 + n_2 \)) with \( P = {\begin{pmatrix} p & q \\ q & p \end{pmatrix} } \) and \( p \gg q \); we refer to such models as \( \textrm{SBM}( n_1, n_2, p, q)\).
	
	\item \textbf{Barab\'asi-Albert model,} \( \text{BA}(n, m)\): A scale-free random graph model on \( n \) nodes where the node degree distribution follows the power law \( k^\gamma \) with \( 2 < \gamma < 3 \), \cite{barabasi1999emergence}. Parameter \( m \) correspond to the degree of the nodes added to the core graph during generation and typically regulates the minimal node degree.
\end{enumerate}

The feature data \( X \in \RR^{d_x \times n} \) is assumed to standard normal i.i.d., \( x_{ij} \sim \nn ( 0, 1 ) \), for the sake of simplicity. The output label data \( Y \) is set to be one-dimensional, \( d_y = 1 \), with \( y_i = f( x_i ) + \epsilon \sum_{j: \{i,j \} \in E } g(x_j) \) where \( f, g : \RR^{d_x} \mapsto \RR^{d_y} \) and the value of \( \epsilon \) controls the impact of the graph structure on the output label. We set \( f(x_i) = g (x_i) = \sum x_{ij} \) and \( \epsilon = 0.1 \) for experiments below and assume that the gradient flow is trained with respect to the mean square error (MSE).

\begin{figure}[hbtp]
\centering
\includegraphics[width = 1.0\columnwidth]{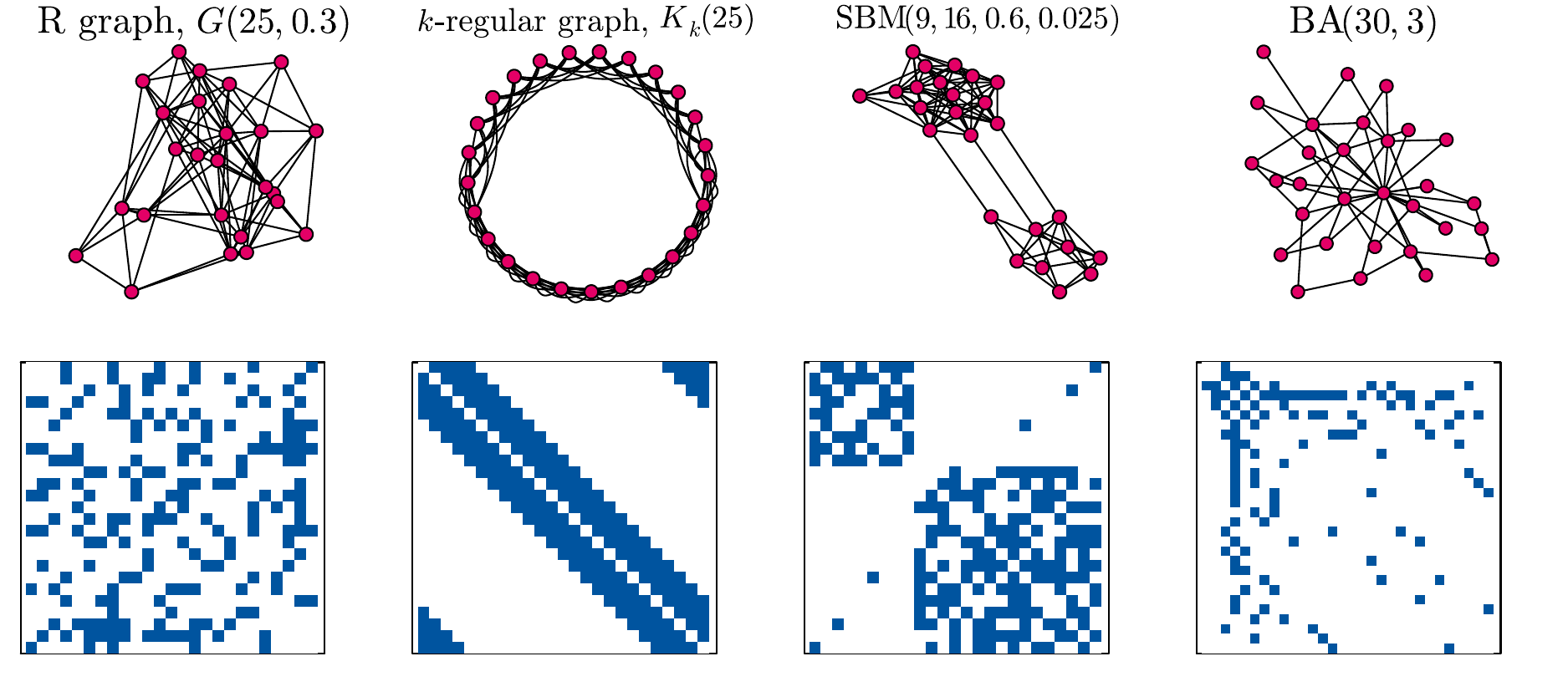}
\caption{
	Examples of graphs generated by \( G(n, p)\), \( K_k(n)\), \(\textrm{SBM}(n_1, n_2, p, q)\), and \(\textrm{BA}(n,m)\) respectively. Sparsity patterns of shift operators (adjacency matrices) are provided in the bottom row.
	\label{fig:models}
}
\end{figure}

\subsubsection{Singular value \( \sgs \left( XS^H \right)_{* \ii }\) }\label{subsec:sings}

Given the estimate of Theorem~\ref{thm:sourav}, the convergence rate is determined by the initialization \( \beta \) (i.e. defined in Remark~\ref{rem:initialisation}) and the singular value \( \sgs \left( ( X S^H)_{*\ii} \right) \) which encodes the graph structure, the initial feature data, and the labeled set.

In Figure~\ref{fig:synthetic_sings}, we demonstrate the dependence of the singular value \( \sgs \left( ( X  S^H )_{*\ii} \right)\) on the size of the labeled data set \( \bar n = | \ii  | \) for the following choices of the shift operator \( S \):
\begin{center}
\begin{tabular}{rl} 
	Adj. : & Adjacency Matrix,\\
	S-L Adj. : & Self-Loop Adjacency Matrix,\\
	Nor. S-L Adj. : & Normalized Self-Loop Adjacency Matrix,\\
	L : & Laplacian Matrix,\\
	Nor. L : & Normalized Laplacian Matrix.
\end{tabular}
\end{center}
For each fixed \( \bar n \), we uniformly sample a number of labeled sets \( \ii  \) in order to account for possible specific labeled configurations in the graph topology. Note that all the models mainly maintain the same ordering of the singular values in terms of the shift operator, i.e. graph Laplacian generates the highest \( \sgs \left( ( X  S^H )_{*\ii} \right)\), normalized self-loop adjacency operator corresponds to the lower singular value, and so on. Additionally, the estimations for adjacency and self-loop adjacency matrices remain numerically close, whilst the position of the corresponding normalized Laplacian curve depends on the model (except BA model which distinguishes between all shift operators): in contrast to \( G(n, p )\) and SBM, \( k\)-nearest neighbors graphs tend to not set apart the former three shift operators in terms of \( \sgs \left( ( X  S^H )_{*\ii} \right)\).

\begin{figure}[hbtp]
\centering
\includegraphics[width = 1.0\columnwidth]{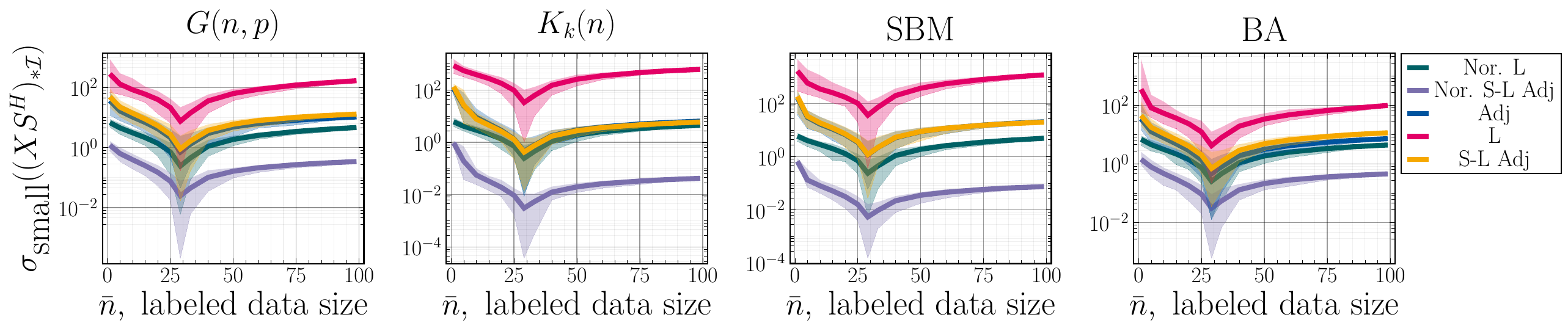}
\caption{
	\small
	Principle singular value \( \sgs \left( ( X  S^H )_{*\ii} \right)\) vs the size of labeled data \( \bar n \) for various shift operators \( S \) in models \( G(n, p)\), \( K_k(n)\), \(\textrm{SBM}(n_1, n_2, p, q)\), and \(\textrm{BA}(n,m)\),  respectively. Input feature dimension \( d_x = 30 \), network depth \( H = 2 \). Solid lines and semi-transparent areas correspond to the average value and spread of \( \sgs \left( ( X  S^H )_{*\ii} \right)\) for uniformly sampled sets \( \ii \). 
	\label{fig:synthetic_sings}
}
\end{figure}

At the same time, the singular value \( \sgs \left( ( X  S^H )_{*\ii} \right)\) for all models and all shift operators exhibits the same universal pattern of behaviour in terms of \( \bar n \): the monotonicity switch around \( \bar n = d_x \), for which one can justify with the following probabilistic argument. Let \( \Pi_{\ii }\) be a binary diagonal matrix such that \( \left[ \Pi_{\ii} \right]_{jj} = 1 \) if and only if \( j \in \ii \) and \( 0 \) otherwise; then, 
\begin{equation*}
\sgs \left( ( X  S^H )_{*\ii} \right) = \sgs \left(  X  S^H \Pi_{\ii} \right) = \sgs \left( \Pi_{\ii} S^H X^\top \right) = \min_{\substack{ \| z \| = 1 \\ z \perp \ker (\Pi_{\ii} S^H X^\top) }} \| \Pi_{\ii} S^H X^\top z \| .	
\end{equation*}
Assume the feature matrix \( X \) has a full-row rank, \( \text{rank}\,(X) = d_x < n \), and each \( x_i \perp \ker S \) (which is reasonable since any element in the null-space of $S$ is suppressed in the linear GNN \eqref{eqn:linear_GNN}), then the operator \( S^H X^\top \) has a full-column rank, \( \text{rank}\, ( S^H X^\top ) = d_x\), and a trivial kernel, so \( \ker (\Pi_{\ii} S^H X^\top) = \ker \Pi_{\ii} \cap \mathrm{im}\,(S^H X^\top)\). Note that \(\mathrm{dim}\, \ker \Pi_{\ii} = n - \bar n\) and \( \mathrm{dim}\, \mathrm{im}\, ( S^H X^\top) = d_x\), so these two subspaces intersect trivially in general position if \( d_x \le \bar n \). Hence,
\begin{equation*}
\mathbb E \sgs \left( ( X  S^H )_{*\ii} \right) = \mathbb E \min_{ \| z \| = 1 } \| \Pi_{\ii} S^H X^\top z \| = \frac{ \bar n }{ n } \sgs ( X S^H) 
\end{equation*}
where the expectation is taken for uniformly sampled labeled sets \( \ii \) of a fixed cardinality \( \bar n \) and \( \mathbb E \Pi_{\ii} = \frac{\bar n }{n } I \). Noticeably, each (\(d_x \le \bar n\))-part of the curves on Figure~\ref{fig:synthetic_sings} closely follows the average estimation \( \frac{ \bar n }{ n } \sgs ( X S^H) \). At the same time, left parts \( d_x > \bar n \) correspond to necessarily rank-deficient matrices \( ( X S^H )_{*\ii }\); since the permutation of matrix's columns conserves singular values \( \sigma_i \) and  each \( ( X S^H )_{*\ii }\) is a minor of the full matrix \( X S^H \) with permuted columns, smaller values of \( \bar n\) correspond to higher values of \(\sgs \left( ( X  S^H )_{*\ii} \right) \) due to the Cauchy's interlace theorem.

Finally, note that in the previous set of assumptions, \( \text{rank}\,(S^H X^\top) = \text{rank}\,(S X^\top) = d_x \), and \( \mathrm{im}\, (S X^\top )\) is spanned (in probabilistic terms, i.e. randomized matrix sketching) by first \( d_x \) dominant singular vectors. Then \(  \sgs ( X S^H)  =  \sgs ( S^H X^\top ) =  \sgs ( S^{H-1} (S X^\top )) = \min_{\| z \| =1 } \| S^{H-1} (S X^\top z) \| \approx \lambda_{d_x}^{H-1}  \sgs ( X S )  \) where \( \lambda_{d_x}\) denotes the \(d_x\)-th largest eigenvalues of the shift operator \( S \). As a result, the value of \( \lambda_{d_x} \) for different shift operators exponentially governs the convergence rate of the training in terms the networks depth \( H \) and affects the initialization, Remark~\ref{rem:initialisation}, and the gradient descent step, Theorem~\ref{thm:grad_desc}.

\subsubsection{Convergence of gradient flow training on the synthetic data}

Finally, we provide an illustration of the convergence of the gradient flow training (Theorem~\ref{thm:gradient_flow}) for all the aforementioned graph models with varying parameters. For each model, graphs on \( n = 200 \) are considered with \( 3 \) most common choices of shift operators: adjacency matrix, Laplacian and normalized Laplacian; input feature data \( X \) is sampled normally, \( d_x = 50 \). We consider a linear GNN architecture with \( H = 2 \) and hidden dimensions \( d_1 = d_2 = 32 \) (so \( W_1 \in \RR^{32 \times  50}\), \( W_2 \in \RR^{ 32 \times 32 }\), and \( W_3 \in \RR^{1 \times 32 }\) ); initialization follows Remark~\ref{rem:initialisation} with \( a = 2 \). 
The labeled data set \( \ii \) is composed of \( \bar n = 0.75 n \) and drawn uniformly; qualitative results below hold for all choices of \( \ii \). We provide the relative loss \( \frac{\ll(\W(T))- \tilde \ll_H}{ \ll(\W(0))- \tilde \ll_H } \) trajectory along the gradient flow and the singular value \( \sgs \left( (X S^H)_{*\ii} \right)\) for each considered setup; recalling Theorem~\ref{thm:gradient_flow}, the relative loss is expected to converge exponentially in time.

Along with the exponential convergence of the linear GNN, we demonstrate the following:
\begin{itemize}[topsep = 0pt, itemsep = 0pt]
\item the sparsity of the graph \( G \) typically affects the convergence rate with unnormalized shift operators such that sparser systems exhibit slower convergence;
\item GNNs using normalized Laplacians tend to be the least affected by the changes in the network structure with \( \textrm{BA}(n,m)\) model showing close-to-none dependency on the injected varying graph structure.
\end{itemize}

\paragraph{Erd\H{o}s--R\'enyi graph, \( G( n, p )\).}

In the classical \( G(n, p)\) model, the edge probability \( p \) regulates the sparsity pattern (as well as the connectivity and the information spread) of the system; we consider the range of \( p \) starting from a sparse but connected graph up to moderate values corresponding to the fast mixing, Figure~\ref{fig:gnp}. 

\begin{figure}[hbtp]
\centering
\includegraphics[width = 1.0\columnwidth]{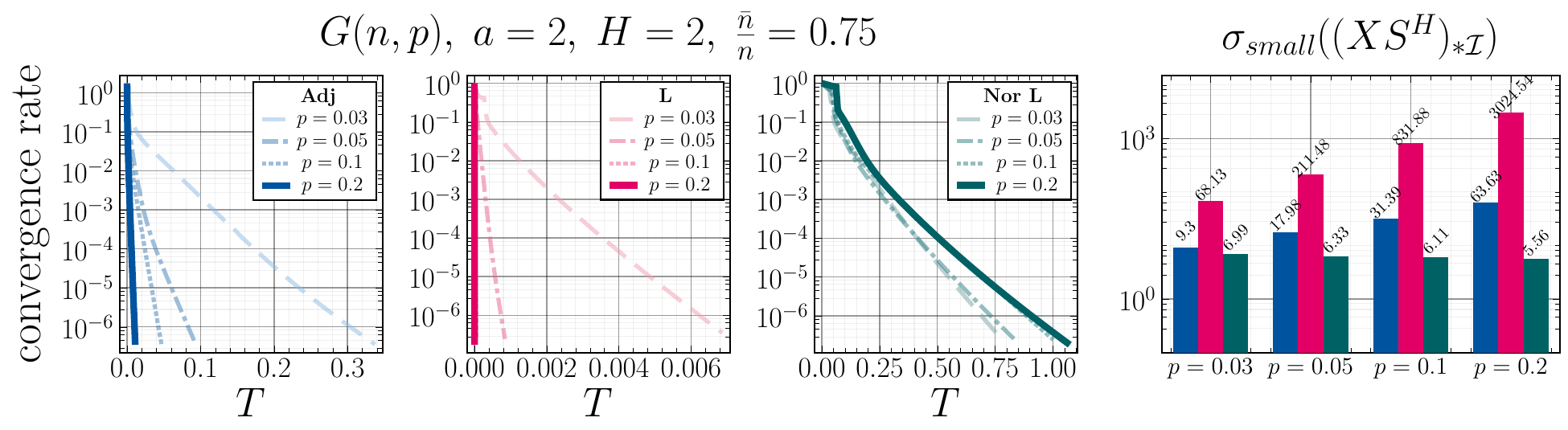}
\caption{
	\small
	Convergence rate of the relative loss \( \frac{\ll(\W(T))- \tilde \ll_H}{ \ll(\W(0))- \tilde \ll_H } \) in the gradient flow training for \( G(n, p)\) model, \( n = 200 \) and varying \( p \). Panes demonstrate loss flows for the different choices of the shift operator \( S \) (left to right: adjacency matrix, graph Laplacian, normalized Laplacian) and the singular values \( \sgs \left( (X S^H)_{*\ii} \right)\); for \( 4 \) different values of \( p \). 
	\label{fig:gnp}
}
\end{figure}

Note that in terms of the choice of shift operator, the convergence rate in Figure~\ref{fig:gnp} generally supports the ordering established in Figure~\ref{fig:synthetic_sings}: gradient flows with unnormalized Laplacian tend to converge overall faster than ones using adjacency matrix, whilst the case of normalized Laplacian remains remarkably stagnant. For both unnormalized operators, convergence rate tends to suffer from sparser graph structures with the effects noticeably worse for the graph Laplacian; on the contrary, in the case of the normalized Laplacian operator, denser graphs may result into the slower convergence rate supported by the smaller values of \( \sgs \left( (X S^H)_{*\ii} \right)\).

\begin{rem}[Normalized gradient flow for larger and denser graphs] 
	
	Linear GNN training in Figure~\ref{fig:gnp} is facilitated through the discrete integration of the gradient flow~\eqref{eqn:gradient_flow}. Since such integration is done in the direction of the anti-gradient, one aims to preserve the non-decreasing nature of the loss \( \ll (\W(T))\) along the trajectory by appropriately choosing the integration step which may be forced to be unfeasibly small. Consequently, the explosive growth of the singular value  \( \sgs \left( (X S^H)_{*\ii} \right)\) yields the faster convergence in time \( T \) (Theorem~\ref{thm:gradient_flow}), but, at the same time, it poorly affects the efficiency of the numerical integration due to the blow-up of gradient norms resulting into the smaller stepsizes, especially for larger and denser graphs with unnormalized shift operators. 

	One possible way to avoid the diminishing stepsizes is to consider instead the normalized gradient flow, \( \frac{dW_{\ell}(t)}{dt} = - \frac{ \nabla_{ W_{\ell} } \ll\big(\W(t)\big) } { \left\| \nabla_{ W_{\ell} } \ll\big(\W(t)\big) \right\|_F} \), allowing for a higher number of nodes and edges in graph \( G \); we provide example for \( G(n,p)\) model in the case of \( n = 500 \) in Figure~\ref{fig:gnp_norm}. Note that whilst the normalized flow is expected to converge to the same global optimum, one loses the guarantee of exponential convergence. Besides that, we posit that qualitative results and aspects of convergence remain the same for both normalized and unnormalized gradient flows, and further provide results only for the unnormalized integration to maintain the theoretical framework of Theorem~\ref{thm:gradient_flow}.
\begin{figure}[hbtp]
	\centering
	\includegraphics[width = 1.0\columnwidth]{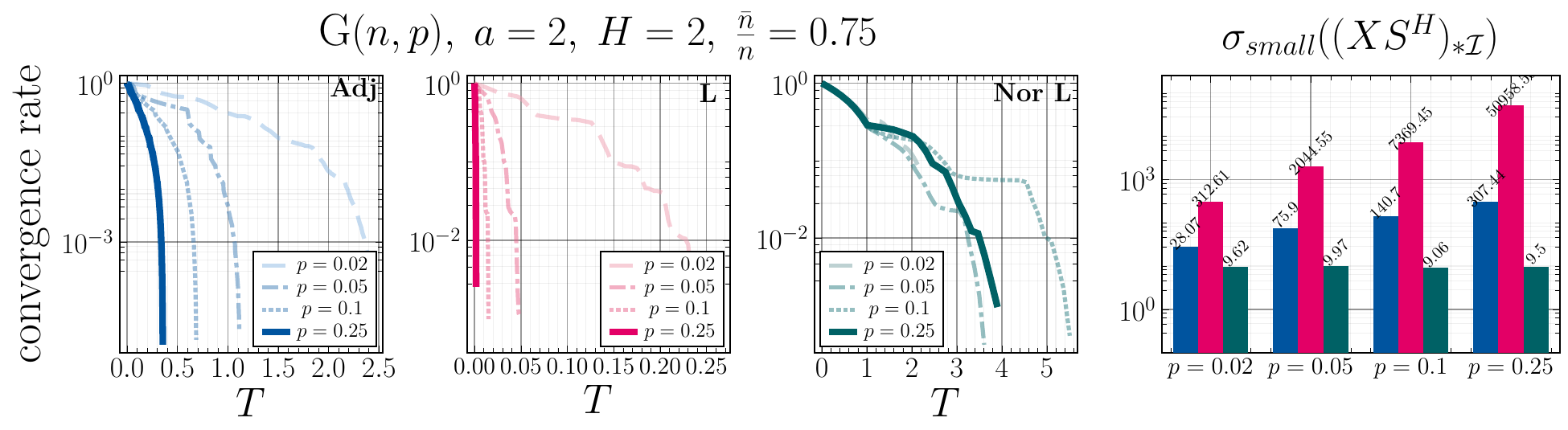}
	\caption{
		\small
		Convergence rate of the relative loss \( \frac{\ll(\W(T))- \tilde \ll_H}{ \ll(\W(0))- \tilde \ll_H } \) in the \textit{ normalized } gradient flow training for \( G(n, p)\) model, \( n = 500 \) and varying \( p \). Panes demonstrate loss flows for the different choices of the shift operator \( S \) (left to right: adjacency matrix, graph Laplacian, normalized Laplacian) and the singular values \( \sgs \left( (X S^H)_{*\ii} \right)\); for \( 4 \) different values of \( p \). 
		\label{fig:gnp_norm}
	}
\end{figure}
\end{rem}

\paragraph{\(k\)-nearest neighbors graph, \( K_k ( n ) \).}

In the \(k\)-regular structure of \( K_k( n )\), the convergence rate for the adjacency matrix is much closer to the case of the normalized graph Laplacian operator, Figure~\ref{fig:reg}. Moreover, the moderate values of the common degree \( k \) result into virtually the same loss trajectory during training, whilst the sparsest case is noticeably slower for unnormalized operators and faster for the normalized Laplacian.

\begin{figure}[h!]
\centering
\includegraphics[width = 1.0\columnwidth]{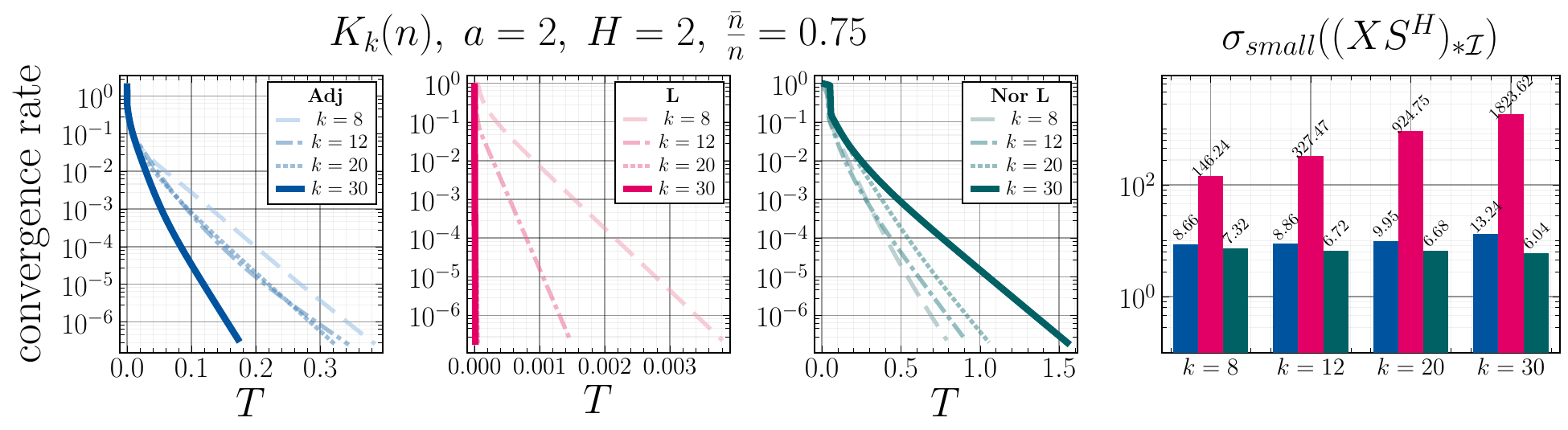}
\caption{
	\small
	Convergence rate of the relative loss \( \frac{\ll(\W(T))- \tilde \ll_H}{ \ll(\W(0))- \tilde \ll_H } \) in the gradient flow training for \( K_k(n)\) model, \( n = 500 \) and varying common degree \( k \). Panes demonstrate loss flows for the different choices of the shift operator \( S \)(left to right: adjacency matrix, graph Laplacian, normalized Laplacian) and the singular values \( \sgs \left( (X S^H)_{*\ii} \right)\); for \( 4 \) different values of \( k \). 
	\label{fig:reg}
}
\end{figure}

\paragraph{Stochastic Block Model, \( \textrm{SBM}(n_1, n_2, p, q)\).}

In comparison with the simple Erd\H{o}s--R\'enyi graph, we consider SBM model with \(n_1 : n_2 =  1 : 2 \) blocks, a fixed edge sparsity \( p \) inside each block  and the varying probability \( q \) corresponding to the inter-cluster edges (starting from the barely connected clusters up to an almost joined structure resembling \( G(n, p)\)), Figure~\ref{fig:sbm}. Noticeably, the singular value \( \sgs \left( (X S^H)_{*\ii} \right) \) is sufficiently less affected by the changes in \( q \) then by the changes in \( p \) in the case of \( G(n, p)\), Figure~\ref{fig:gnp}, with the adjacency matrix showing the largest distinction. As a result, the convergence rate for the SBM model is influenced by the inter-block connections during training only for larger \( q \) when the graph is well mixed for both adjacency matrix and unnormalized Laplacian.

\begin{figure}[h!]
\centering
\includegraphics[width = 1.0\columnwidth]{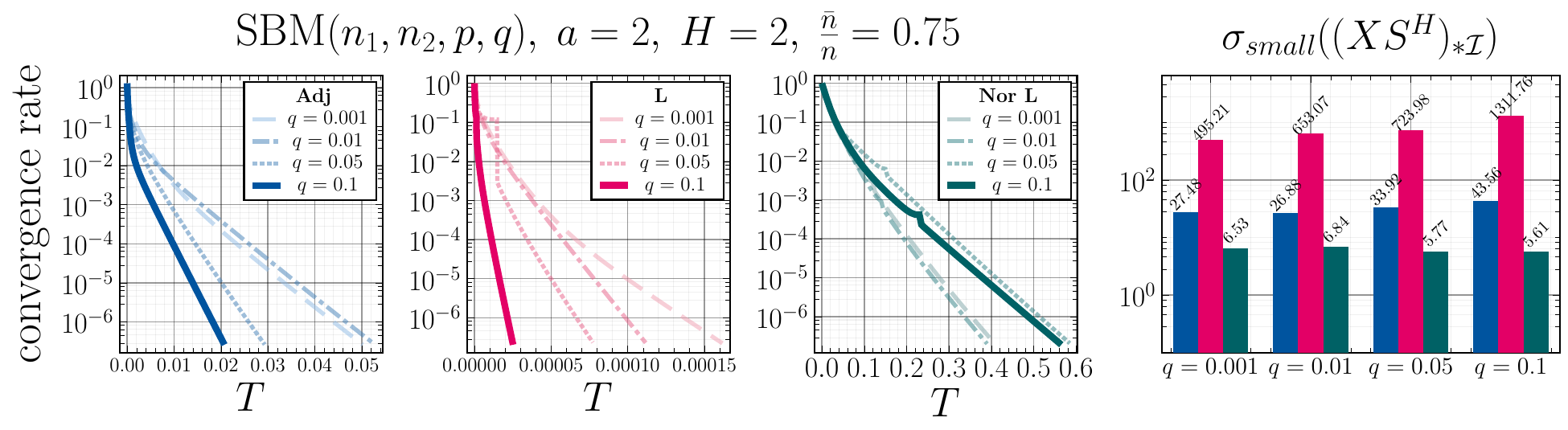}
\caption{
	\small
	Convergence rate of the relative loss \( \frac{\ll(\W(T))- \tilde \ll_H}{ \ll(\W(0))- \tilde \ll_H } \) in the gradient flow training for \( \textrm{SBM}(n_1, n_2, p, q)\) model, \( n_1 = 66 \), \( n_2 = 134\), \( p = 0.15 \) and varying \( q \). Panes demonstrate loss flows for the different choices of the shift operator \( S \)(left to right: adjacency matrix, graph Laplacian, normalized Laplacian) and the singular values \( \sgs \left( (X S^H)_{*\ii} \right)\); for \( 4 \) different values of \( q \). 
	\label{fig:sbm}
}
\end{figure}

\paragraph{Barab\'asi-Albert model, \( \textrm{BA}( n, m ) \).}

Finally, scale-free model \( \textrm{BA}( n, m ) \) noticeably breaks established pattern: in the case of small minimal degree values \( m \) (e.g. in the situations with many hanging nodes), the adjacency matrix exhibits the principle singular value \( \sgs \left( (X S^H)_{*\ii} \right)\) on par with the smallest one amongst chosen shift operators. Moreover, the overall convergence of the loss maintains the previously established pattern of dependency on the network's density for adajcency matrix and Laplacian operator, whilst the convergence for the normalized Laplacian is virtually independent on the changing network structure.

\begin{figure}[h!]
\centering
\includegraphics[width = 1.0\columnwidth]{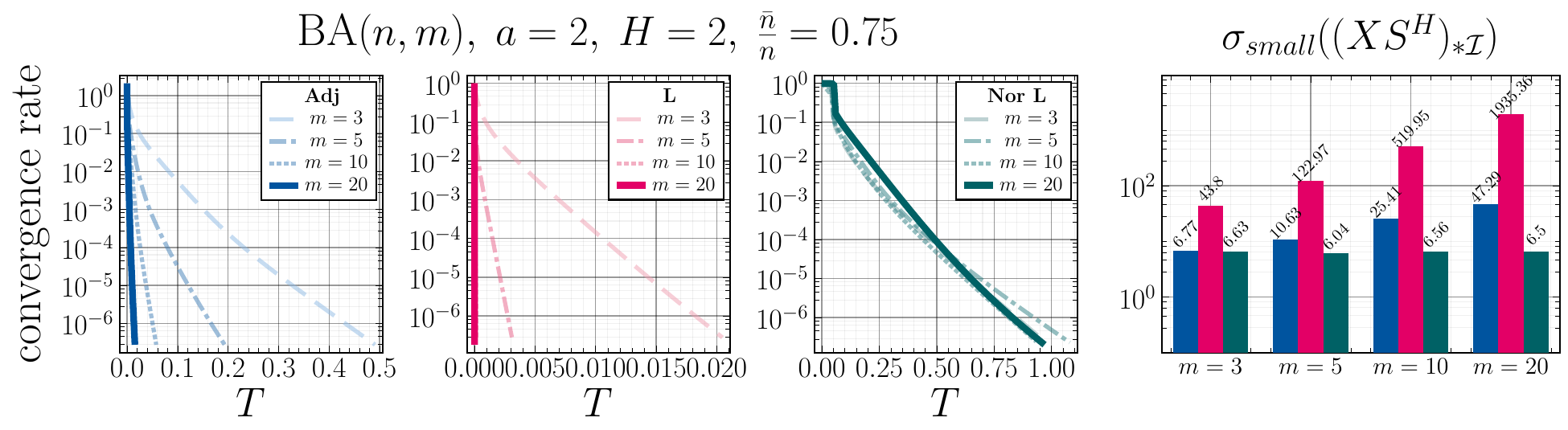}
\caption{
	\small
	Convergence rate of the relative loss \( \frac{\ll(\W(T))- \tilde \ll_H}{ \ll(\W(0))- \tilde \ll_H } \) in the gradient flow training for \( \textrm{BA}(n, m)\) model, \( n = 200 \) and varying minimal degree \( m \). Panes demonstrate loss flows for the different choices of the shift operator \( S \)(left to right: adjacency matrix, graph Laplacian, normalized Laplacian) and the singular values \( \sgs \left( (X S^H)_{*\ii} \right)\); for \( 4 \) different values of \( m \). 
	\label{fig:ba}
}
\end{figure}

\subsection{Convergence of gradient flow training on real-world graphs}
	
	In this subsection, we illustrate the gradient dynamics using a real-world graph dataset. We consider the graph structure with $3107$ nodes, corresponding to counties of the United States, and edges representing counties that share borders. The features data $X$, comprise land surface temperature, precipitation, sunlight, and fine particulate matter, and label data $Y$, represent air temperature, collected from CDC climate data for each month, averaged over 2008 \cite{CDCdataset}. The datasets are normalized to have zero mean and unit standard deviation.
	
	We now apply the gradient flow training on the graph neural network associated with the CDC climate data, and endorse Theorem \ref{thm:gradient_flow}. 
	In our numerical simulation, we assume that $75\%$ of the node label data are known and $\ii$ be the index set of such nodes. We consider a linear graph neural network defined in \eqref{eqn:linear_GNN} with $H=2$ and the initial weight matrices: $W_1\in \RR^{32\times 48}$ is a zero matrix, $W_2\in \RR^{32\times 32}$ is the identity matrix, and $W_3\in \RR^{12\times 32}$ has diagonal entries $1.5$ and all other $0$. Figure \ref{fig:three_figures} shows that the rate of convergence, i.e., the ratio of the relative MSE at time $T$ and the initial relative MSE, is accelerated by the smallest non-zero singular value of $(XS^H)_{*\ii}$. In the experiment, we consider different aggregate matrix $S$ of the graph as given in the subsection~\ref{subsec:sings}.
	
	
	The initial choice of weight matrices proportionate the rate of convergence of MSE to the global minimum. In particular, we consider a $2$-layer linear GNN with aggregation matrix $S$ as adjacency matrix of the graph, initial weight matrices $W_1\in \RR^{32\times 48}$ is a zero matrix and $W_2$ is the identity matrix of order $32$. Consequently, Figure \ref{fig:a_flow} illustrates that as the initial choice of diagonal entries of $W_3$ increases, so does the convergence rate. Although the convergence rate in gradient flow training is proportional to $\sgs\big((XS^H)_{*\ii}\big)$ and smallest singular value of initial weight matrices, graph neural networks are usually trained iteratively in practice. The iteration step must be sufficiently small depending on significant values of $\sgs\big((XS^H)_{*\ii}\big)$ and $\sgmin(W_i(0))$. 
	\begin{figure}[h!]
		\centering
		\graphicspath{ {Figure/} }
		\begin{subfigure}{0.33\textwidth}
			\centering
			\includegraphics[width=1\linewidth]{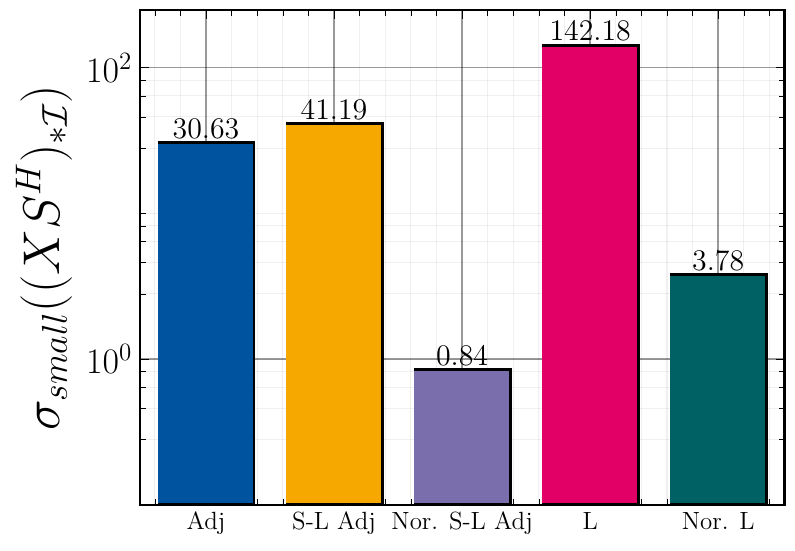}
			\captionsetup{font=tiny}
			\caption{$\sgs\big((XS^H)_{*\ii}\big)$}
		\end{subfigure}%
		\begin{subfigure}{0.33\textwidth}
			\centering
			\includegraphics[width=1\linewidth]{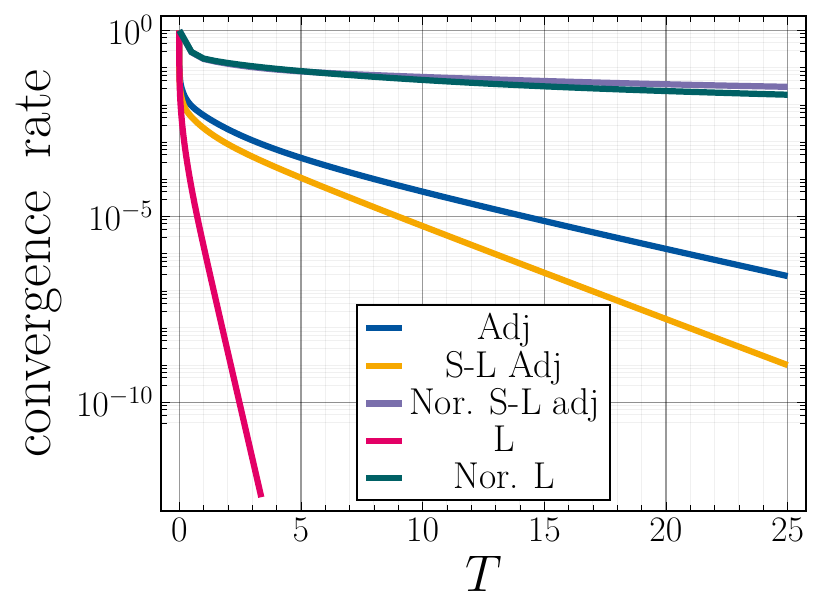}
			\captionsetup{font=tiny}
			\caption{Convergence rate vs Aggregation matrix}
		\end{subfigure}%
		\begin{subfigure}{0.33\textwidth}
			\centering
			\includegraphics[width=1\linewidth]{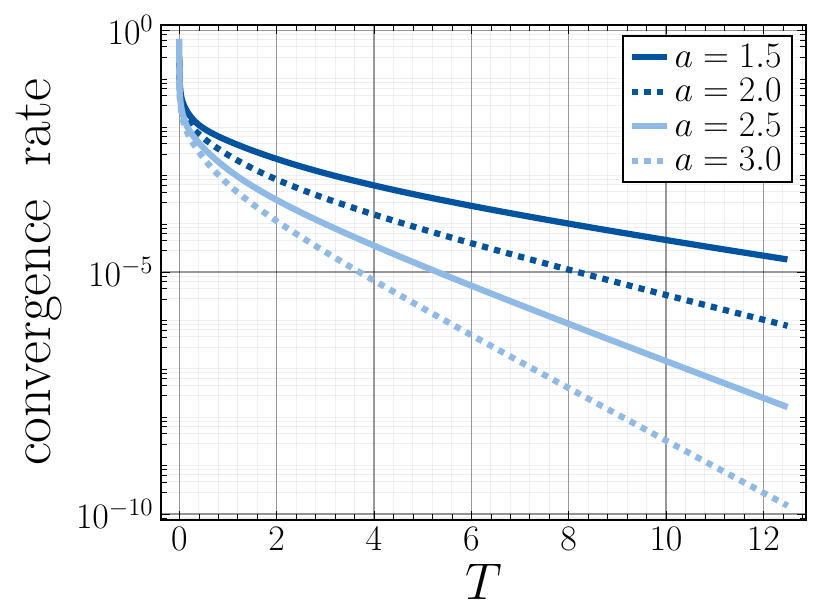}
			\captionsetup{font=tiny}
			\caption{Convergence rate vs Initial weight}
			\label{fig:a_flow}
		\end{subfigure}
		\caption{Comparison of convergence rate in GNNs training for different aggregation matrices. Left: Chart of smallest non-zero singular value of $(XS^H)_{*\ii}$ for different $S$. Middle: In gradient flow training of GNNs with fixed initial weight matrices, convergence rate is proportional to $\sgs\big((XS^H)_{*\ii}\big)$. Right: Convergence rate of a fixed aggregation matrix depends on initial weight matrices.}
		\label{fig:three_figures}
	\end{figure}

		\section{Conclusion}
		We study the convergence of gradient dynamics in linear graph neural networks. In order to avoid the existence of any sub-local minimum, we focus our study on GNNs without bottleneck layer. We demonstrate that, with appropriate initialization, the square loss converges to the global minimum at an exponential rate during gradient flow training of linear GNNs. Furthermore, a balanced initialization minimizes the total energy of the weight parameters at the global minimum. The convergence rate depends on the aggregation matrix and the initial weights, as confirmed through numerical experiments on synthetic and real-world datasets.

	\section*{Acknowledgment}
	The authors acknowledge funding by the Deutsche Forschungsgemeinschaft (DFG, German Research Foundation) - Project number 442047500 through the Collaborative Research Center ``Sparsity and Singular Structures” (SFB 1481).
	
	\bibliographystyle{abbrv}
	\bibliography{A07_paper1}
\end{document}